\begin{document}

\title{Online Learning Schemes for Power Allocation in Energy Harvesting Communications}
\author{Pranav Sakulkar and Bhaskar Krishnamachari \\
Ming Hsieh Department of Electrical Engineering \\ 
Viterbi School of Engineering \\
University of Southern California, Los Angeles, CA, USA \\
{\tt \{sakulkar, bkrishna\}@usc.edu } 
\thanks{Part of this work was presented at the \it{International Conference on Signal Processing and Communications (SPCOM)}, June 2016.}
}
\date{\today}

\maketitle

\begin{abstract}
We consider the problem of power allocation over one or more time-varying channels with unknown distributions in energy harvesting communications. In the single-channel case, the transmitter chooses the transmit power based on the amount of stored energy in its battery with the goal of maximizing the average rate over time. We model this problem as a Markov decision process (MDP) with transmitter as the agent, battery status as the state, transmit power as the action and rate as the reward. The average reward maximization problem can be modelled by a linear program (LP) that uses the transition probabilities for the state-action pairs and their reward values to select a power allocation policy. This problem is challenging because the uncertainty in channels implies that the mean rewards associated with the state-action pairs are unknown. We therefore propose two online learning algorithms: LPSM and Epoch-LPSM that learn these rewards and adapt their policies over time. For both algorithms, we prove that their regret is upper-bounded by a constant. To our knowledge this is the first result showing constant regret learning algorithms for MDPs with unknown mean rewards. We also prove an even stronger result about LPSM: that its policy matches the optimal policy exactly in finite expected time. Epoch-LPSM incurs a higher regret compared to LPSM, while reducing the computational requirements substantially. We further consider a multi-channel scenario where the agent also chooses a channel in each slot, and present our multi-channel LPSM (MC-LPSM) algorithm that explores different channels and uses that information to solve the LP during exploitation. MC-LPSM incurs a regret that scales logarithmically in time and linearly in the number of channels. Through a matching lower bound on the regret of any algorithm, we also prove the asymptotic order optimality of MC-LPSM.
\end{abstract}

\begin{IEEEkeywords}
Contextual bandits, multi-armed bandits (MABs), online learning, energy harvesting communications, Markov decision process (MDP).
\end{IEEEkeywords}

\section{Introduction}
Communication systems where the transmissions are powered by the energy harvested from nature have rapidly emerged as viable options for the next-generation wireless networks \cite{ulukus2015}. These advances are promising as they help us reduce the dependence on the conventional sources of energy and the carbon footprint of our systems. The ability to harvest energy promises self-sustainability and prolonged network lifetimes that are only limited by the communication hardware rather than the energy storage. Energy can be harvested from various natural sources, such as solar, thermal, chemical, biological etc. Their technologies differ in terms of their efficiency and harvesting capabilities, depending on the mechanisms, devices, circuits used. Since these energy sources are beyond human control, energy harvesting brings up a novel aspect of irregular and random energy source for communication. This demands a fresh look at the transmission schemes used by wireless networks. The next generation wireless networks need to be designed while keeping the irregularities and randomness in mind.

The performance of the energy harvesting communication systems is dependent on the efficient utilization of energy that is currently stored in the battery, as well as that is to be harvested over time. These systems must make decisions while keeping their impact on the future operations in mind. Such problems of optimal utilization of the available resources can be classified into offline optimization \cite{yang2012, tutuncuoglu2012, ozel2011} and online optimization \cite{lei2009, sinha2012, wang2012, ho2012, ozel2011} problems. In the offline optimization problems, the transmitter deterministically knows the exact amounts of the harvested energies and the data along with their exact arrival times. These assumptions are too optimistic, since the energy harvesting communication systems are usually non-deterministic. In the online optimization problems, however, the transmitter is assumed to know the distributions or some statistics of the harvesting and the data arrival processes. It may get to know their instantaneous realizations before making decisions in each slot. The problem considered in this paper falls in the category of the online optimization problems. 

In our paper, the channel gain distributions are assumed to be unknown and the harvested energy is assumed be stochastically varying with a known distribution. This is based on the fact that the weather conditions are more predictable than the radio frequency (RF) channels which are sensitive to time-varying multi-path fading. In the single channel case, the transmitter has to decide its transmit power level based on the current battery status with the goal maximizing the average expected transmission rate obtained over time. We model the system as a Markov decision process (MDP) with the battery status as the state, the transmit power as the action, the rate as the reward. The power allocation problem, therefore, reduces to the average reward maximization problem for an MDP. Since the channel gain distribution is unknown, their expected rates for different power levels are also unknown. The transmitter or the agent, therefore, cannot determine the optimal mapping from the battery state to the transmit power precisely. It needs to learn these rate values over time and make decisions along the way. We cast this problem as an online learning problem over an MDP.

In the multi-channel case, the agent needs to also select a channel from the set of channels for transmission. The gain-distributions of these channels are, in general, different and unknown to the agent. Due to different distributions, the optimal channels for different transmit powers can be different. The agent, therefore, needs to explore different channels over time to be able to estimate their rate values and use these estimates to choose a power-level and a channel at each time. This exploration of different channels leads the agent into making non-optimal power and channel decisions and thus hampers the performance of the online learning algorithm.  

One interesting feature of this problem is that the data-rate obtained during transmission is a known function of the chosen transmit power level and the instantaneous channel condition. Whenever a certain power is used for transmission, the agent can figure out the instantaneous channel condition once the instantaneous rate is revealed to it. This information about the instantaneous gain of the chosen channel can, therefore, be used to update the rate-estimates for all power levels. The knowledge of the rate function can be used to speed up the learning process in this manner.

\subsection{Contributions}
The problem of maximizing the average expected reward of an MDP can be formulated as a linear program (LP). The solution of this LP gives the stationary distribution over the state-action pairs under the optimal policy. If the MDP is ergodic, then there exists a deterministic optimal policy. We model the problem of communication over a single channel using the harvested energy as an MDP and prove its ergodicity. This helps us focus only on the deterministic state-action mappings which are finite in number.

The LP formulation helps us characterize the optimal policy that depends on the transition probabilities for the state-action pairs and their corresponding mean rewards. We use the optimal mean reward obtained from the LP as a benchmark to compare the performance of our algorithms with. Since the mean rewards associated the state-action pairs are unknown to the agent, we propose two online learning algorithms: LPSM and Epoch-LPSM that learn these rewards and adapt their policies along the way. The LPSM algorithm solves the LP at each step to decide its current policy based on its current sample mean estimates for the rewards, while the Epoch-LPSM algorithm divides the time into epochs, solves the LP only at the beginning of the epochs and follows the obtained policy throughout that epoch. We measure the performance of our online algorithms in terms of their regrets, defined as the cumulative difference between the optimal mean reward and the instantaneous reward of the algorithm. We prove that the reward loss or regret incurred by each of these algorithms is upper bounded by a constant. To our knowledge this is the first result where constant regret algorithms are proposed for the average reward maximization problem over MDPs with stochastic rewards with unknown means. We further prove that the LPSM algorithm starts following the genie's optimal policy in finite expected time. The finite expected time is an even stronger result than the constant regret guarantee. 

Our proposed single channel algorithms greatly differ in their computational requirements. Epoch-LPSM incurs a higher regret compared to LPSM, but reduces the computational requirements substantially. LPSM solves a total of $T$ LPs in time $T$, whereas Epoch-LPSM solves only $O(\ln T)$ number of LPs. We introduce two parameters $n_{0}$ and $\eta$ that reveal the computation vs regret tradeoff for Epoch-LPSM. Tuning these parameters allows the agent to control the system based on its performance requirements.  

We extend our framework to the case of multiple channels where the agent also needs to select the transmission channel in each slot. We present our MC-LPSM algorithm that deterministically separates exploration from exploitation. MC-LPSM explores different channels to learn their expected rewards and uses that information to solve the average reward maximization LP during the exploitation slots. The length of the exploration sequence scales logarithmically over time and contributes to the bulk of the regret. This exploration, however, helps us bound the exploitation regret by constant. We, therefore, prove a regret bound for MC-LPSM that scales logarithmically in time and linearly in the number of channels. This design of the exploration sequence, however, needs to know a lower bound on the difference in rates for the channels. We observe that this need of knowing some extra information about the system can be eliminated by using a longer exploration sequence as proposed in \cite{vakili2013}. The regret of this design can be made arbitrarily close to the logarithmic order. We also prove an asymptotic regret lower bound of $\Om(\ln T)$ for any algorithm under certain conditions. This proves the asymptotic order optimality of the proposed MC-LPSM approach. We further show that, similar to Epoch-LPSM, the MC-LPSM algorithm also solves only $O(\ln T)$ number of LPs in time $T$. 

We show that the proposed online learning algorithms also work for cost minimization problems in packet scheduling with power-delay tradeoff with minor changes.

\subsection{Organization}
This paper is organized as follows. First, we describe the model for the energy harvesting communication system using a single channel, formulate this problem as an MDP and discuss the structure of the optimal policy in section \ref{sec:sys_model}. We then propose our online learning algorithms LPSM and Epoch-LPSM for single channel systems and prove their regret bounds in section \ref{sec:online_learning}. We extend our approach to multi-channel systems, propose our MC-LPSM algorithm, and analyze its regret in section \ref{sec:multi_channel}. In section \ref{sec:packet}, we show that our online learning framework can also model the average cost minimization problems over MDPs.  Section \ref{sec:simu} presents the results of numerical simulations for this problem and section \ref{sec:conclusion} concludes the paper. We also include appendices \ref{apx:lemmas} and \ref{apx:MC} to discuss and prove some of the technical lemmas at the end of the paper.

\section{Related Work}
The offline optimization problems in the energy harvesting communications assume a deterministic system and the exact knowledge of energy and data arrival times and their amounts. In \cite{yang2012}, the goal is to minimize the time by which all the packets are delivered. In \cite{tutuncuoglu2012}, a finite horizon setting is considered with the goal of maximizing the amount of transmitted data. These two problems are proved to be duals of each other in \cite{tutuncuoglu2012}. In the online problems, the system is usually modelled as an MDP with the objective being the maximization of the average reward over time. In \cite{li2010}, the packets arrive according a Poisson distribution and each packet has a random value assigned to it. The reward, in this setting, corresponds to the sum of the values of the successfully transmitted packets. In \cite{sinha2012}, the transmitter is assumed to know the full channel state information before the transmission in each slot. The properties of the optimal transmission policy are characterized using dynamic programming for this setting. In \cite{wang2012}, the data arrivals are assumed to follow a Bernoulli distribution and a policy iteration based scheme is designed to minimize the transmission errors. In \cite{ho2012}, power allocation policies over a finite time horizon with known channel gain and harvested energy distributions are studied. In \cite{ozel2011}, the offline and online versions of the throughput optimization problem are studied for a fading channel.

Our problem can be seen from the lens of contextual bandits, which are extensions of the standard multi-armed bandits (MABs). In the standard MAB problem \cite{lai1985, auer2002, vakili2013}, the agent is presented with a set of arms each providing stochastic rewards over time with unknown distributions and it has to choose an arm in each trial with the goal of maximizing the sum reward over time. In \cite{lai1985}, Lai and Robbins provide an asymptotic lower bound of $\Om(\ln T)$ on the expected regret of any algorithm for this problem. In \cite{auer2002}, an upper confidence bound based policy called UCB1 and a randomized policy that separates exploration from exploitation called $\e$-greedy are proposed and are also proved to achieve logarithmic regret bounds for arm distributions with finite support. In \cite{vakili2013}, a deterministic equivalent of $\e$-greedy called DSEE is proposed and proved to provide similar regret guarantees. In the contextual bandits, the agent also sees some side-information before making its decision in each slot. In the standard contextual bandit problems \cite{langford2008epoch, dudik2011randUCB, agarwal2014monster, sakulkar2016dcb}, the contexts are assumed to be drawn from an unknown distribution independently over time. In our problem, the battery state can be viewed as the context. We model the context transitions by an MDP, since the agent's action at time $t$ affects not only the instantaneous reward but also the context in slot $t+1$. The agent, therefore, needs to decide the actions with the global objective in mind, i.e. maximizing the average reward over time. The algorithms presented in \cite{langford2008epoch, dudik2011randUCB, agarwal2014monster} do not assume any specific relation between the context and the reward. The DCB($\e$) algorithm presented in \cite{sakulkar2016dcb}, however, assumes that the mapping from the context and random instance to the reward is a known function and uses this function knowledge to reduce the expected regret. It must, however, be noted that the MDP formulation generalizes the contextual bandit setting in \cite{sakulkar2016dcb}, since the i.i.d. context case can be viewed as a single state MDP. 

Our problem is also closely related to the reinforcement learning problem over MDPs from \cite{ortner2007ucrl,auer2009ucrl2,tewari2008olp}. The objective for these problems is to maximize the average undiscounted reward over time. In \cite{ortner2007ucrl,auer2009ucrl2}, the agent is unaware of the transition probabilities and the mean rewards corresponding to the state-action pairs. In \cite{tewari2008olp}, the agent knows the mean rewards, but the transition probabilities are still unknown. In our problem, the mean rewards are unknown, while the transition probabilities of the MDP can be inferred from the knowledge of the arrival distribution and the action taken from each state. In contrast to the works above, for our problem motivated by the practical application in energy harvesting communications, we show that the learning incurs a constant regret in the single channel case. 

\section{System Model} \label{sec:sys_model}
We describe the model of the energy harvesting communication system considered in this paper using a single channel. Consider a time-slotted energy harvesting communication system where the transmitter uses the harvested power for transmission over a channel with stochastically varying channel gains with unknown distribution as shown in figure \ref{fig:en_harvest}. Let $p_{t}$ denote the harvested power in the $t$-th slot which is assumed to be i.i.d. over time. Let $Q_{t}$ denote the stored energy in the transmitter's battery that has a capacity of $Q_{\max}$. Assume that the transmitter decides to use $q_{t} (\leq Q_{t})$ amount of power for transmission in $t$-th slot. We assume discrete and finite number of power levels for the harvested and transmit powers. The rate obtained during the $t$-th slot is assumed to follow a relationship 
\begin{equation}
    r_{t} = B \log_{2}(1 + q_{t} X_{t}), \label{eq:rate_eq}
\end{equation}
where $X_{t}$ denotes the instantaneous channel gain-to-noise ratio of the channel which is assumed to be i.i.d. over time and $B$ is the channel bandwidth. The battery state gets updated in the next slot as 
\begin{equation}
    Q_{t+1} = \min\{Q_{t} - q_{t} + p_{t}, Q_{\max} \}. \label{eq:battery_update}
\end{equation}
The goal is utilize the harvested power and choose a transmit power $q_{t}$ in each slot sequentially to maximize the expected average rate $\underset{T\rightarrow \infty}{\lim} \frac{1}{T} \bbE \left[ \sum_{t=1}^{T}  r_{t} \right]$ obtained over time.

\begin{figure}
    \centering
    \includegraphics[width=0.8\textwidth]{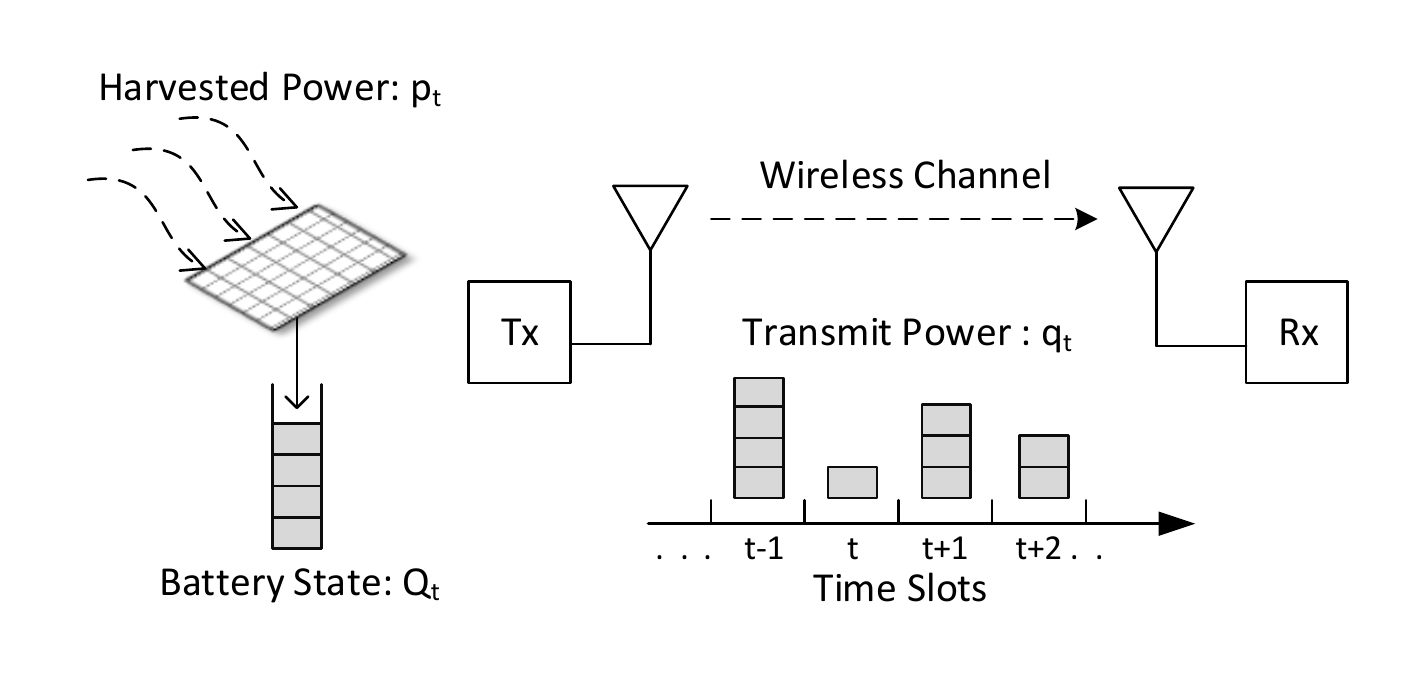}
    \caption{Power allocation over a wireless channel in energy harvesting communications}
    \label{fig:en_harvest}
\end{figure}

\subsection{Problem Formulation}
Consider an MDP $\sM$ with a finite state space $\sS$ and a finite action space $\sA$. Let $\sA_{s} \subset \sA$ denote the set of allowed actions from state $s$. When the agent chooses an action $a_{t} \in \sA_{s_{t}}$ in state $s_{t} \in \sS$, it receives a random reward $r_{t}(s_{t},a_{t})$. Based on the agent's decision the system undergoes a random transition to a state $s_{t+1}$ according to the transition probability $P(s_{t+1} \mid s_{t}, a_{t})$. In the energy harvesting problem, the battery status $Q_{t}$ represents the system state $s_{t}$ and the transmit power $q_{t}$ represents the action taken $a_{t}$ at any slot $t$. 

In this paper, we consider systems where the random rewards of various state action pairs can be modelled as
\begin{equation}
    r_{t}(s_{t},a_{t}) = f(s_{t}, a_{t}, X_{t}), \label{eq:reward_fn}
\end{equation}
where $f$ is a reward function known to the agent and $X_{t}$ is a random variable internal to the system that is i.i.d. over time. Note that in the energy harvesting communications problem, the reward is the rate obtained at each slot and the reward function is defined in equation (\ref{eq:rate_eq}). 
In this problem, the channel gain-to-noise ratio $X_{t}$ corresponds to the system's internal random variable. We assume that the distribution of the harvested energy $p_{t}$ is known to the agent. This implies that the state transition probabilities $P(s_{t+1} \mid s_{t}, a_{t})$ are inferred by the agent based on the update equation (\ref{eq:battery_update}). 

A policy is defined as any rule for choosing the actions in successive time slots. The action chosen at time $t$ may, therefore, depend on the history of previous states, actions and rewards. It may even be randomized such that the action $a \in \sA_{s}$ is chosen from some distribution over the actions. A policy is said to be stationary, if the action chosen at time $t$ is only a function of the system state at $t$. This means that a deterministic stationary policy $\b$ is a mapping from the state $s \in \sS$ to its corresponding action $a \in \sA_{s}$. When a stationary policy is played, the sequence of states $\{s_{t} \mid t =1,2,\cdots \}$ follows a Markov chain. An MDP is said to be {\it ergodic}, if every deterministic stationary policy leads to an irreducible and aperiodic Markov chain. According to section V.3 from \cite{ross1983sdp}, the average reward can be maximized by an appropriate deterministic stationary policy $\b^{*}$ for an ergodic MDP with finite state space. In order to arrive at an ergodic MDP for the energy harvesting communications problem, we make following assumptions. When the battery state $Q_{t} > 0$, the transmit power $q_{t} > 0$. The distribution of the harvested energy is such that $\pr\{ p_{t} = p\} > 0$ for all $0 \leq p \leq Q_{\max}$. Under these assumptions, we claim and prove the ergodicity of the MDP as follows.

\begin{proposition}
The MDP corresponding to the transmit power selection problem in energy harvesting communications is ergodic.
\end{proposition}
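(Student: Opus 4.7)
The plan is to unpack the definition of ergodicity given in the excerpt: fix an arbitrary deterministic stationary policy $\beta : \sS \to \sA$ (with $\sS = \{0, 1, \ldots, Q_{\max}\}$) and show that the induced Markov chain on the battery state is both irreducible and aperiodic. The two assumptions I would lean on are: (i) $\beta(s) \geq 1$ whenever $s > 0$, and $\beta(0) = 0$; and (ii) $\pr\{p_t = p\} > 0$ for every $p \in \{0, 1, \ldots, Q_{\max}\}$. Using the update rule \eqref{eq:battery_update}, these two facts essentially force the state $0$ to act as a universal hub.

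First I would prove irreducibility by a "drain-then-refill" argument routed through state $0$. Starting from any state $s > 0$, repeatedly realizing $p_t = 0$ (positive probability by (ii)) yields $Q_{t+1} = Q_t - \beta(Q_t)$, a strictly decreasing sequence since $\beta(Q_t) \geq 1$ on positive states. After at most $s$ steps the chain reaches $0$ along a trajectory of positive probability. Conversely, from state $0$ we have $\beta(0) = 0$, so $Q_{t+1} = \min\{p_t, Q_{\max}\} = p_t$, and by (ii) every state $s' \in \sS$ is reached in a single step with positive probability. Composing these two paths, every pair $(s, s')$ communicates, which is irreducibility.

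For aperiodicity, I would exhibit a self-loop at $0$: since $\beta(0) = 0$ and $\pr\{p_t = 0\} > 0$, we get $P(0 \mid 0) > 0$, so state $0$ has period $1$. Because the chain is irreducible, all states share this common period, so the chain is aperiodic. Since $\beta$ was arbitrary, the MDP is ergodic by the definition in the excerpt.

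I do not anticipate a real obstacle here; the two boundary conventions on $\beta$ and the full-support assumption on $p_t$ are tailored exactly to make state $0$ serve as an accessible, aperiodic hub, and the monotone-drain argument makes the reachability of $0$ from any $s$ trivial. The only point requiring a little care is making sure the action $\beta(s)$ is always feasible along the drain path, which is automatic because $s$ strictly decreases and $\beta(s') \leq s'$ is implicit in the constraint $q_t \leq Q_t$.
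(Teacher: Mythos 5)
Your proof is correct and takes essentially the same approach as the paper: both establish irreducibility by combining a positive-probability downward path (forced by $\beta(s)\geq 1$ on positive states) with positive-probability upward transitions (from the full support of $p_t$), and both get aperiodicity from a self-loop. The only cosmetic difference is that you route all communication through state $0$ as a hub, whereas the paper shows directly that $P^{(1)}(s,s')>0$ for every $s'\geq s$ and that $P^{(1)}(s,s-1)>0$, giving self-loops at every state rather than only at $0$.
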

\begin{proof}
Consider any policy $\b$ and let $P^{(n)}(s, s')$ be the $n$-step transition probabilities associated with the Markov chain resulting from the policy. 

First, we prove that $P^{(1)}(s, s') > 0$ for any $s' \geq s$ as follows. According to the state update equations, 
\begin{equation}
s_{t+1} = s_{t} - \b(s_{t}) + p_{t}.
\end{equation}
The transition probabilities can, therefore, be expressed as
\begin{align}
P^{(1)}(s, s') = \pr \{ p = s' - s + \b(s) \} \geq 0, \label{eq:tran_prob_1}
\end{align}
since $s' \geq s$ and $\b(s) \geq 0$ for all states. This implies that any state $s' \in \sS$ is accessible from any other state $s$ in the resultant Markov chain, if $s \leq s'$.

Now, we prove that $P^{(1)}(s, s-1) > 0$ for all $s \geq 1$ as follows. From equation (\ref{eq:tran_prob_1}), we observe that 
\begin{align}
P^{(1)}(s, s-1) = \pr \{ p = \b(s) - 1 \} \geq 0, \label{eq:tran_prob_2}
\end{align}
since $\b(s) \geq 1$ for all $s \geq 1$. This implies that every state $s \in \sS$ is accessible from the state $s+1$ in the resultant Markov chain.

Equations (\ref{eq:tran_prob_1}) and (\ref{eq:tran_prob_2}) imply that all the state pairs $(s, s+1)$ communicate with each other. Since communication is an equivalence relationship, all the states communicate with each other and the resultant Markov chain is irreducible. Also, equation (\ref{eq:tran_prob_1}) implies that $P^{(1)}(s, s) > 0$ for all the states and the Markov chain is, therefore, aperiodic.
\end{proof}

Since the MDP under consideration is ergodic, we restrict ourselves to the set of deterministic stationary policies which we interchangeably refer to as policies henceforth. Let $\mu(s,a)$ denote the expected reward associated with the state-actions pair $(s,a)$ which can be expressed as
\begin{equation}
\mu(s,a) = \bbE \left[ r(s,a) \right] = \bbE_{X} \left[ f(s,a, X) \right]. \label{eq:mean_reward}
\end{equation}
For ergodic MDPs, the optimal mean reward $\rho^{*}$ is independent of the initial state (see \cite{puterman2005mdp}, section 8.3.3). It is specified as 
\begin{equation}
    \rho^{*} = \max_{\b \in \sB} \rho(\b, \bfM),
\end{equation}
where $\sB$ is the set of all policies, $\bfM$ is the matrix whose $(s,a)$-th entry is $\mu(s,a)$, and $\rho(\b, \bfM)$ is the average expected reward per slot using policy $\b$. We use the optimal mean reward as the benchmark and define the cumulative regret of a learning algorithm after $T$ time-slots as
\begin{equation}
    \mathfrak{R}(T) \defeq T \rho^{*} - \bbE\left[ \sum_{t=0}^{T-1} r_{t}\right]. \label{eq:regret}
\end{equation}

\subsection{Optimal Stationary Policy}
When the expected rewards for all state-action pairs $\mu(s,a)$ and the transition probabilities $P(s' \mid s,a)$ are known, the problem of determining the optimal policy to maximize the average expected reward over time can be formulated as a linear program (LP) (see e.g. \cite{ross1983sdp}, section V.3) shown below.
\begin{equation}
\begin{aligned}
    & \text{maximize}         & & \sum_{s \in \sS} \sum_{a \in \sA_{s}} \pi(s, a) \mu(s, a) \\
    & \text{subject to} & & \pi(s, a) \geq 0, \; \forall s \in \sS, a \in \sA_{s}, \\
    &                   & & \sum_{s \in \sS} \sum_{a \in \sA_{s}} \pi(s, a) = 1,  \\
    &                   & & \sum_{a \in \sA_{s'}} \pi(s', a) = \sum_{s \in \sS} \sum_{a \in \sA_{s}} \pi(s, a) P(s' \mid s, a), \; \forall s' \in \sS, 
\end{aligned} \label{eq:LP}
\end{equation}
where $\pi(s,a)$ denotes the stationary distribution of the MDP. The objective function of the LP from equation (\ref{eq:LP}) gives the average rate corresponding to the stationary distribution $\pi(s,a)$, while the constraints make sure that this stationary distribution corresponds to a valid policy on the MDP. Such LPs can be solved by using standard solvers such as CVXPY \cite{cvxpy}. 

If $\pi^{*}(s, a)$ is the solution to the LP from (\ref{eq:LP}), then for every $s \in \sS$, $\pi(s, a) > 0 $ for only one action $a \in \sA_{s}$. This is due to the fact the the optimal policy $\b^{*}$ is deterministic for ergodic MDPs in average reward maximization problems (see \cite{puterman2005mdp}, section 8.3.3). Thus for this problem, $\b^{*}(s) = \argmax\limits_{a \in \sA_{s}} \pi^{*}(s, a)$. Note that we, henceforth, drop the action index from the stationary distribution, since the policies under consideration are deterministic and the corresponding action is, therefore, deterministically known. In general, we use $\pi_{\b}(s)$ to denote the stationary distribution corresponding to the policy $\b$. It must be noted that the stationary distribution of any policy is independent of the reward values and only depends on the transition probability for every state-action pair. The expected average reward depends on the stationary distribution as 
\begin{equation}
    \rho(\b, \bfM) = \sum_{s \in \sS} \pi_{\b}(s) \mu(s, \b(s)).
\end{equation}
In terms of this notation, the LP from (\ref{eq:LP}) equivalent to maximizing $\rho(\b, \bfM)$ over $\b \in \sB$. Since the matrix $\bfM$ is unknown, we develop online learning policies for our problem in the next section.

\section{Online Learning Algorithms} \label{sec:online_learning}
For the power allocation problem under consideration, although the agent knows the state transition probabilities, the mean rewards for the state-action pairs $\mu(s,a)$ values are still unknown. Hence, the agent cannot solve the LP from (\ref{eq:LP}) to figure out the optimal policy. Any online learning algorithm needs to learn the reward values over time and update its policy adaptively. One interesting aspect of the problem, however, is that the reward function from equation (\ref{eq:reward_fn}) is known to the agent. Since the reward functions under consideration (\ref{eq:rate_eq}) is bijective, once the reward is revealed to the agent, it can infer the instantaneous realization of the random variable $X$. This can be used to predict the rewards that would have been obtained at that time for other state-action pairs using the function knowledge.

In our online learning framework, we store the average values of these inferred rewards $\th(s,a)$ for all state-action pairs. The idea behind our algorithms is to use the estimated sample mean values for the optimization problem instead of the unknown $\mu(s,a)$ values in the objective function of the LP from (\ref{eq:LP}). Since the $\th(s,a)$ values get updated after each reward revelation, the agent needs to solve the LP again and again. We propose two online learning algorithms: LPSM (linear program of sample means) where the agent solves the LP at each slot and Epoch-LPSM where the LP is solved at fixed pre-defined time slots. Although the agent is unaware of the actual $\mu(s,a)$ values, it learns the statistics $\th(s,a)$ over time and eventually figures out the optimal policy.

Let $B(s,a) \geq \sup_{x \in \sX} f(s,a, x) - \inf_{x \in \sX} f(s,a, x)$ denote any upper bound on the maximum possible range of the reward for the state-action pair $(s,a)$ over the support $\sX$ of the random variable $X$. We use following notations in the analysis of our algorithms: $B_{0} \defeq \underset{(s,a)}{\max}\; B(s,a)$, $\D_{1} \defeq \rho^{*} - \underset{\b \neq \b^{*}}{\max}\; \rho(\b, \bfM)$. The total number of states and actions are specified as $S \defeq \abs{\sS}$, $A \defeq \abs{\sA}$, respectively. Also, $\Th_{t}$ denotes the matrix containing the entries $\th_{t}(s,a)$ at time $t$.

\subsection{LPSM}
The LPSM algorithm presented in algorithm \ref{algo:LPSM} solves the LP at each time-step and updates its policy based on the solution obtained. It stores only one $\th$ value per state-action pair. Its required storage is, therefore, $O(SA)$. In theorem \ref{thm:non_opt_1}, we derive an upper bound on the expected number of slots where the LP fails to find the optimal solution during the execution of LPSM. We use this result to bound the total expected regret of LPSM in theorem \ref{thm:reg_bound_1}. These results guarantee that the regret is always upper bounded by a constant. Note that, for the ease of exposition, we assume that the time starts at $t=0$. This simplifies the analysis and has no impact on the regret bounds. 

\begin{algorithm}
\caption{LPSM}
\label{algo:LPSM}
\begin{algorithmic}[1]
\STATE  {\bf Initialization:} For all $(s,a)$ pairs, $\th(s,a) = 0$.
\FOR {$n = 0$} 
\STATE Given the state $s_{0}$ and choose any valid action;
\STATE Update all $(s,a)$ pairs: $\th(s,a) = f(s,a, x_{0})$;
\ENDFOR
\STATE // MAIN LOOP
\WHILE {1}
\STATE $n = n +1$;
\STATE Solve the LP from (\ref{eq:LP}) using $\th(s,a)$ in place of unknown $\mu(s,a)$;
\STATE In terms of the LP solution $\pi_{(n)}$, define $\b_{n}(s)= \argmax\limits_{a \in \sA_{s}} \pi_{(n)}(s,a), \; \forall s \in \sS$;
\STATE Given the state $s_{n}$, select the action $\b_{n}(s_{n})$;
\STATE Update for all valid $(s,a)$ pairs:
\begin{align}
\th(s,a) \leftarrow \frac{n \th(s,a) + f(s,a, x_{n})}{n+1}; \nn
\end{align} 
\ENDWHILE
\end{algorithmic}
\end{algorithm} 

\begin{theorem} \label{thm:non_opt_1}
The expected number of slots where non-optimal policies are played by LPSM is upper bounded by
\begin{equation}
1 + \frac{\left( 1 + A \right) S}{\me^{\frac{1}{2} \left(\frac{\D_{1}}{B_{0}}\right)^{2} } - 1}. \label{eq:UB_LPSM}
\end{equation}
\end{theorem}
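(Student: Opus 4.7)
My plan is to identify a sufficient event $\mathcal{G}_n$ on the sample-mean matrix $\Th_n$ under which the LP solved by LPSM at slot $n$ returns $\b^*$, and then to control $\pr\{\mathcal{G}_n^c\}$ via one-sided Hoeffding concentration. The key observation is that for any deterministic stationary policy $\b$,
\begin{equation*}
\rho(\b, \Th_n) = \sum_{s \in \sS} \pi_\b(s) \th_n(s, \b(s)),
\end{equation*}
so entrywise control of $\th_n - \bfM$ translates directly into LP-objective control for every candidate policy. I would take $\mathcal{G}_n$ to be the conjunction of (i) $\th_n(s, \b^*(s)) \geq \mu(s, \b^*(s)) - \D_1/2$ for every $s \in \sS$, and (ii) $\th_n(s, a) \leq \mu(s, a) + \D_1/2$ for every $(s, a)$ with $a \in \sA_s$. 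Under (i), $\rho(\b^*, \Th_n) \geq \rho^* - \D_1/2$ because $\sum_s \pi_{\b^*}(s) = 1$; under (ii), for every $\b \neq \b^*$, $\rho(\b, \Th_n) \leq \rho(\b, \bfM) + \D_1/2 \leq \rho^* - \D_1/2$ by the definition of $\D_1$. Hence on $\mathcal{G}_n$ the LP uniquely picks $\b^*$.

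Next I would bound $\pr\{\mathcal{G}_n^c\}$. For $n \geq 1$, each $\th_n(s, a)$ is the empirical mean of $n$ i.i.d. realizations of $f(s, a, X_t)$, each supported in an interval of length at most $B_0$. Applying one-sided Hoeffding at deviation $\D_1/2$, each single failure event in (i) or (ii) has probability at most $\me^{-\tfrac{n}{2}(\D_1/B_0)^2}$. A union bound over the $S$ lower-deviation events in (i) and the at most $SA$ upper-deviation events in (ii) yields
\begin{equation*}
\pr\{\mathcal{G}_n^c\} \leq (1+A) S \, \me^{-\tfrac{n}{2}(\D_1/B_0)^2}.
\end{equation*}
Counting slot $n = 0$ as non-optimal by default (the very first action is arbitrary) and summing the geometric tail from $n = 1$ then gives
\begin{equation*}
\mathbb{E}[\text{non-optimal slots}] \leq 1 + \sum_{n=1}^{\infty} (1+A) S \, \me^{-\tfrac{n}{2}(\D_1/B_0)^2} = 1 + \frac{(1+A)S}{\me^{\tfrac{1}{2}(\D_1/B_0)^2} - 1},
\end{equation*}
which is exactly the claimed bound.

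The step to get right, and the reason the constant is $(1+A)S$ rather than the crude $2SA$, is the asymmetry of the failure events: only \emph{underestimating} one of the $S$ optimal-action mean rewards or \emph{overestimating} one of the (at most) $SA$ entries of $\Th_n$ can tip the LP maximizer away from $\b^*$, so these two families contribute \emph{single-sided} Hoeffding tails that add to $(1+A)S$ rather than doubling to $2SA$. A minor technicality is the boundary case where the inequalities in (i) and (ii) are tight simultaneously, which is handled either by strict inequality in the definition of $\D_1$ or by a fixed tie-breaking rule in the LP solver; neither affects the stated constant.
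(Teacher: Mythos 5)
Your proposal is correct and follows essentially the same route as the paper's proof: the paper decomposes the failure event $\{\rho(\b^{*},\Th_{t})\leq\rho(\b_{t},\Th_{t})\}$ into an underestimation event on the $S$ optimal-action entries and an overestimation event on the (at most) $SA$ entries, rules out the gap event via the definition of $\D_{1}$, and applies one-sided Hoeffding plus a union bound and a geometric sum — which is exactly the complement of your good event $\mathcal{G}_{n}$. The only cosmetic difference is that you argue via the sufficient condition for success while the paper argues via necessary conditions for failure; the constants, the asymmetry giving $(1+A)S$ rather than $2SA$, and the final summation are identical.
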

\begin{proof}
Let $\b_{t}$ denote the policy obtained by LPSM at time $t$ and $\bbI(z)$ be the indicator function defined to be $1$ when the predicate $z$ is true, and $0$ otherwise. Now the number of slots where non-optimal policies are played can be expressed as
\begin{align}
N_{1} &= 1 + \sum_{t=1}^{\infty} \bbI \left\{ \b_{t} \neq \b^{*} \right\} \nn \\
&= 1 + \sum_{t=1}^{\infty} \bbI \left\{ \rho(\b^{*}, \Th_{t}) \leq \rho(\b_{t}, \Th_{t})\right\}. \label{eq:N_bound}
\end{align}
We observe that $\rho(\b^{*}, \Th_{t}) \leq \rho(\b_{t}, \Th_{t})$ implies that at least one of the following inequalities must be true:
\begin{align}
\rho(\b^{*}, \Th_{t}) &\leq \rho(\b^{*}, \bfM) - \frac{\D_{1}}{2} \label{eq:event1} \\
\rho(\b_{t}, \Th_{t}) &\geq \rho(\b_{t}, \bfM) + \frac{\D_{1}}{2} \label{eq:event2} \\
\rho(\b^{*}, \bfM) &< \rho(\b_{t}, \bfM) +  \D_{1}. \label{eq:event3}
\end{align}
Note that the event from condition (\ref{eq:event3}) can never occur, because of the definition of $\D_{1}$. Hence we upper bound the probabilities of the other two events. For the first event from condition (\ref{eq:event1}), we get
\begin{align}
\pr \left\{ \rho(\b^{*}, \Th_{t}) \leq \rho(\b^{*}, \bfM) - \frac{\D_{1}}{2} \right\} &= \pr \left\{ \sum_{s \in \sS} \pi^{*}(s, \b^{*}(s)) \th_{t}(s, \b^{*}(s)) \leq \sum_{s \in \sS} \pi^{*}(s, \b^{*}(s)) \mu(s, \b^{*}(s)) - \frac{\D_{1}}{2} \right\}  \nn \\
&\leq \pr \left\{ \text{For at least one state}\ s \in \sS: \right. \nn \\
& \quad \quad \quad \left. \pi^{*}(s, \b^{*}(s)) \th_{t}(s, \b^{*}(s)) \leq \pi^{*}(s, \b^{*}(s)) \left( \mu(s, \b^{*}(s)) - \frac{\D_{1}}{2} \right) \right\}  \nn \\
&\leq \sum_{s \in \sS} \pr \left\{ \pi^{*}(s, \b^{*}(s)) \th_{t}(s, \b^{*}(s)) \leq \pi^{*}(s, \b^{*}(s)) \left( \mu(s, \b^{*}(s)) - \frac{\D_{1}}{2} \right) \right\}  \nn \\
&\leq \sum_{s \in \sS} \pr \left\{ \th_{t}(s, \b^{*}(s)) \leq \mu(s, \b^{*}(s)) - \frac{\D_{1}}{2}\right\} \nn \\
&\overset{(a)}{\leq} \sum_{s \in \sS} \me^{-2 \left(\frac{\D_{1}}{2 B(s,\b^{*}(s))}\right)^{2} t } \nn \\
&= S \me^{-2 \left(\frac{\D_{1}}{2 B_{0}}\right)^{2} t }, \label{eq:cond1_proof}
\end{align}
where $(a)$ holds due to Hoeffding's inequality from lemma \ref{lem:hoeffding} (see appendix \ref{apx:lemmas}).

Similarly for the second event from condition (\ref{eq:event2}), we get
\begin{align}
\pr \left\{ \rho(\b_{t}, \Th_{t}) \geq \rho(\b_{t}, \bfM) + \frac{\D_{1}}{2}\right\} &= \pr \left\{ \sum_{s \in \sS} \sum_{a \in \sA_{s}} \pi_{\b_{t}}(s, a) \th_{t}(s, a) \geq \sum_{s \in \sS} \sum_{a \in \sA_{s}} \pi_{\b_{t}}(s, a) \mu(s, a) + \frac{\D_{1}}{2} \right\}  \nn \\
&\leq \pr \left\{ \text{For at least one state-action pair}\ (s,a): \right. \nn \\
& \quad \quad \quad \left. \pi_{\b_{t}}(s, a) \th_{t}(s, a) \geq \pi_{\b_{t}}(s, a) \left( \mu(s, a) + \frac{\D_{1}}{2} \right) \right\}  \nn \\
&\leq \sum_{s \in \sS} \sum_{a \in \sA_{s}} \pr \left\{ \pi_{\b_{t}}(s, a) \th_{t}(s, a) \geq \pi_{\b_{t}}(s, a) \left(\mu(s, a) + \frac{\D_{1}}{2} \right) \right\}  \nn \\
&= \sum_{s \in \sS} \sum_{a \in \sA_{s}} \pr \left\{ \th_{t}(s, a) \geq \mu(s, a) + \frac{\D_{1}}{2} \right\} \nn \\
&\overset{(b)}{\leq} \sum_{s \in \sS} \sum_{a \in \sA_{s}} \me^{-2 \left(\frac{\D_{1}}{2 B(s,a)}\right)^{2} t } \nn \\
&\leq S A \me^{-2 \left(\frac{\D_{1}}{2 B_{0}}\right)^{2} t }, \label{eq:cond2_proof}
\end{align}
where $(b)$ holds due to Hoeffding's inequality from lemma \ref{lem:hoeffding} in appendix \ref{apx:lemmas}.

The expected number of non-optimal policies from equation (\ref{eq:N_bound}), therefore, can be expressed as
\begin{align}
\bbE[ N_{1}] & \leq 1 + \sum_{t=0}^{\infty} \pr \left\{ \rho(\b^{*}, \Th_{t}) \leq \rho(\b_{t}, \Th_{t})\right\} \nn \\
& \leq 1 + \sum_{t=1}^{\infty} \left( \pr \left\{ \rho(\b^{*}, \Th_{t}) \leq \rho(\b^{*}, \bfM) - \frac{\D_{1}}{2} \right\} + \pr \left\{ \rho(\b_{t}, \Th_{t}) \geq \rho(\b_{t}, \bfM) + \frac{\D_{1}}{2} \right\} \right) \nn \\
& \leq 1 + \left( 1 + A \right) S \sum_{t=1}^{\infty}  \me^{-2 \left(\frac{\D_{1}}{2 B_{0}}\right)^{2} t } \nn \\
& \leq 1 + \left( 1 + A \right) S \frac{\me^{-\frac{1}{2} \left(\frac{\D_{1}}{B_{0}}\right)^{2}}}{1 - \me^{-\frac{1}{2} \left(\frac{\D_{1}}{B_{0}}\right)^{2} } } \nn \\
&\leq 1 + \frac{\left( 1 + A \right) S}{\me^{\frac{1}{2} \left(\frac{\D_{1}}{B_{0}}\right)^{2} } - 1}. \label{eq:fail_count}
\end{align}
\end{proof}

It is important to note that even if the optimal policy is found by the LP and played during certain slots, it does not mean that regret contribution of those slots is zero.  According to the definition of regret from equation (\ref{eq:regret}), regret contribution of a certain slot is zero if and only if the optimal policy is played and the corresponding Markov chain is at its stationary distribution. In appendix \ref{apx:MC}, we introduce tools to analyze the mixing of Markov chains and characterize this regret contribution in theorem \ref{thm:opt_regret}. These results are used to upper bound the LPSM regret in the next theorem.

\begin{theorem} \label{thm:reg_bound_1}
The total expected regret of the LPSM algorithm is upper bounded by 
\begin{equation}
    \left(1 + \frac{\left( 1 + A \right) S}{\me^{\frac{1}{2} \left(\frac{\D_{1}}{B_{0}}\right)^{2} } - 1} \right) \left( \frac{\mu_{\max}}{1-\g} + \D_{\max} \right). \label{eq:regret_1}
\end{equation}
where $\g = \max\limits_{s, s' \in \sS} \norm{P_{*}(s',\cdot) - P_{*}(s,\cdot)}_{\tv}$, $P_{*}$ denotes the transition probability matrix corresponding to the optimal policy, $\mu_{\max} = \max\limits_{s \in \sS, a \in \sA_{s}} \mu(s,a)$ and $\D_{\max} =  \rho^{*} - \min\limits_{s \in \sS, a \in \sA_{s}} \mu(s,a)$.
\end{theorem}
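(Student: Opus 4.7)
The plan is to decompose the expected regret into two contributions based on the identity of the policy played: (i) slots in which LPSM plays a non-optimal policy, and (ii) slots in which LPSM plays the optimal policy $\b^{*}$ but the induced Markov chain has not yet settled into its stationary distribution. Both contributions are naturally driven by the same underlying events, namely the occurrence of non-optimal slots, whose expected count $\bbE[N_{1}]$ is already controlled by Theorem \ref{thm:non_opt_1}.

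For contribution (i), each slot in which a non-optimal policy is chosen incurs instantaneous expected regret at most $\D_{\max}$, since $\D_{\max} = \rho^{*} - \min_{s,a}\mu(s,a)$ upper bounds the gap between $\rho^{*}$ and any realized expected per-slot reward. Summing over all non-optimal slots and taking expectation yields a contribution of at most $\D_{\max}\cdot\bbE[N_{1}]$.

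For contribution (ii), I would invoke the mixing machinery developed in Appendix \ref{apx:MC}, and in particular Theorem \ref{thm:opt_regret}. Whenever LPSM deviates from $\b^{*}$, the state distribution at the beginning of the subsequent block of optimal-policy slots can be arbitrary. However, once $\b^{*}$ is being played, the total-variation distance between the current state distribution and the stationary distribution $\pi_{\b^{*}}$ contracts geometrically at rate $\g = \max_{s,s'}\norm{P_{*}(s,\cdot)-P_{*}(s',\cdot)}_{\tv}$. Since each unit of total-variation mismatch contributes at most $\mu_{\max}$ to the per-slot regret, the entire tail of optimal-policy slots triggered by a single non-optimal slot contributes at most
\begin{equation}
\mu_{\max}\bigl(1 + \g + \g^{2} + \cdots\bigr) = \frac{\mu_{\max}}{1-\g}. \nn
\end{equation}
Summing over all triggering events yields a bound of $\mu_{\max}/(1-\g) \cdot \bbE[N_{1}]$. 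Combining (i) and (ii) gives a total expected regret at most $\bbE[N_{1}]\bigl(\mu_{\max}/(1-\g) + \D_{\max}\bigr)$, and substituting the bound (\ref{eq:UB_LPSM}) from Theorem \ref{thm:non_opt_1} recovers (\ref{eq:regret_1}).

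The main obstacle is the careful bookkeeping in step (ii): a single optimal-policy slot that follows many prior non-optimal slots could, in principle, be charged multiple times if one is not careful. The appendix's Theorem \ref{thm:opt_regret} should provide the clean per-event accounting, ensuring that each non-optimal slot is charged the mixing tail $\mu_{\max}/(1-\g)$ at most once. The geometric contraction is valid only after switching back to $\b^{*}$, so the worst-case distribution at the switch point is what drives the $\mu_{\max}/(1-\g)$ bound; verifying this rigorously via a coupling or eigenvalue-gap argument is where I would spend most of the effort.
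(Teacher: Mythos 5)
Your proposal is correct and follows essentially the same route as the paper: both decompose the regret into non-optimal slots (each charged $\D_{\max}$) and maximal phases of consecutive optimal-policy slots (each charged $\mu_{\max}/(1-\g)$ via Theorem \ref{thm:opt_regret}), and both bound the number of each by the expected count of non-optimal slots from Theorem \ref{thm:non_opt_1}. The bookkeeping concern you raise is resolved exactly as you suggest — the paper charges per optimal phase, and since any two such phases are separated by at least one non-optimal slot, each phase (hence each mixing tail) is counted once.
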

\begin{proof}
The regret of LPSM arises when either non-optimal actions are taken or when optimal actions are taken and the corresponding Markov chain is not at stationarity. For the first source of regret, it is sufficient to analyze the number of instances where the LP fails to find the optimal policy. For the second source, however, we need to analyze the total number of phases where the optimal policy is found in succession.

Since only the optimal policy is played in consecutive slots in a phase, it corresponds to transitions on the Markov chain associated with the optimal policy and the tools from appendix \ref{apx:MC} can be applied. According to theorem \ref{thm:opt_regret}, the regret contribution of any phase is bounded from above by $(1-\g)^{-1} \mu_{\max}$. As proved in theorem \ref{thm:non_opt_1}, for $t\geq 1$, the expected number of instances of non-optimal policies is upper bounded by $\frac{\left( 1 + A \right) S}{\me^{\frac{1}{2} \left(\frac{\D_{1}}{B_{0}}\right)^{2} } - 1}$. Since any two optimal phases must be separated by at least one non-optimal slot, the expected number of optimal phases is upper bounded by $1 + \frac{\left( 1 + A \right) S}{\me^{\frac{1}{2} \left(\frac{\D_{1}}{B_{0}}\right)^{2} } - 1}$. Hence, for $t\geq 1$, the expected regret contribution from the slots following the optimal policy is upper bounded by 
\begin{equation}
    \left(1 + \frac{\left( 1 + A \right) S}{\me^{\frac{1}{2} \left(\frac{\D_{1}}{B_{0}}\right)^{2} } - 1} \right) \frac{\mu_{\max}}{1-\g}.
\end{equation}

Note that the maximum regret possible during one slot is $\D_{\max}$. Hence for the slots where non-optimal policies are played, the corresponding expected regret contribution is upper bounded by $\left(1 + \frac{\left( 1 + A \right) S}{\me^{\frac{1}{2} \left(\frac{\D_{1}}{B_{0}}\right)^{2} } - 1} \right)$.

Overall expected regret for the LPSM algorithm is, therefore, bounded from above by equation (\ref{eq:regret_1}).
\end{proof}

\begin{remark}
It must be noted that we call two policies as same if and only if they recommend identical actions for every state. It is, therefore, possible for a non-optimal policy to recommend optimal actions for some of the states. In the analysis of LPSM, we count all occurrences of non-optimal policies as regret contributing occurrences in order to upper bound the regret.
\end{remark}

\begin{remark}
Note that the LPSM algorithm presented above works for general reward functions $f$. The rate function in the energy harvesting communications is, however, not dependent on the state which is the battery status. It is a function of the transmit power level and the channel gain only. The LPSM algorithm, therefore, needs to store only one $\th$ variable for each transmit power level and needs $O(A)$ storage overall. The probability of event from condition (\ref{eq:event2}) is bounded by a tighter upper bound of $A \me^{-2 \left(\frac{\D_{1}}{2 B_{0}}\right)^{2} t }$. The regret upper bound from theorem \ref{thm:reg_bound_1} is also tightened to 
\begin{equation}
    \left(1 + \frac{\left( S + A \right)}{\me^{\frac{1}{2} \left(\frac{\D_{1}}{B_{0}}\right)^{2} } - 1} \right) \left( \frac{\mu_{\max}}{1-\g} +  \D_{\max} \right).
\end{equation}
\end{remark}

For the LPSM algorithm, we can prove a stronger result about the convergence time. Let $Z$ be the random variable corresponding to the first time-slot after which LPSM never fails to find the optimal policy. This means that $Z-1$ represents the last time-slot where LPSM finds a non-optimal policy. We prove that the expected value of $Z$ is finite, which means that LPSM takes only a finite amount time in expectation before it starts following the genie. We present this result in theorem \ref{thm:finite_exp_time}.

\begin{theorem} \label{thm:finite_exp_time}
For the LPSM algorithm, the expected value of the convergence time $Z$ is finite.
\end{theorem}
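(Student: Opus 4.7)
The plan is to prove $\mathbb{E}[Z] < \infty$ by using the tail sum formula $\mathbb{E}[Z] = \sum_{t=0}^{\infty} \Pr\{Z > t\}$ and showing that $\Pr\{Z > t\}$ itself decays geometrically in $t$. The key observation is that the per-slot failure bound derived inside the proof of Theorem \ref{thm:non_opt_1} is already exponential in $t$, so a union bound over all slots $s \geq t$ is still summable.

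First I would translate the event $\{Z > t\}$ into a statement about the policies: by the definition of $Z$ given just before the theorem, $\{Z > t\}$ is exactly the event that LPSM produces a non-optimal policy at some slot $s \geq t$, i.e.\ $\{Z > t\} = \bigcup_{s \geq t} \{\beta_s \neq \beta^*\}$. A union bound then gives
\begin{equation}
\Pr\{Z > t\} \;\leq\; \sum_{s=t}^{\infty} \Pr\{\beta_s \neq \beta^*\}. \nonumber
\end{equation}

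Next I would reuse the per-slot analysis already carried out in the proof of Theorem \ref{thm:non_opt_1}: combining the Hoeffding bounds obtained in equations (\ref{eq:cond1_proof}) and (\ref{eq:cond2_proof}) (and using that condition (\ref{eq:event3}) is impossible by the definition of $\Delta_1$) yields
\begin{equation}
\Pr\{\beta_s \neq \beta^*\} \;\leq\; (1+A)S \, \me^{-\frac{1}{2}\left(\frac{\Delta_1}{B_0}\right)^{2} s}. \nonumber
\end{equation}
Plugging this into the union bound above, the right-hand side is a geometric tail starting at $s=t$, so
\begin{equation}
\Pr\{Z > t\} \;\leq\; \frac{(1+A)S}{1-\me^{-\frac{1}{2}(\Delta_1/B_0)^2}}\,\me^{-\frac{1}{2}\left(\frac{\Delta_1}{B_0}\right)^{2} t}. \nonumber
\end{equation}

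Finally I would sum over $t$: bounding the $t=0$ term trivially by $1$ and using the geometric bound for $t \geq 1$ gives
\begin{equation}
\mathbb{E}[Z] \;=\; \sum_{t=0}^{\infty}\Pr\{Z>t\} \;\leq\; 1 + \frac{(1+A)S}{\bigl(1-\me^{-\frac{1}{2}(\Delta_1/B_0)^2}\bigr)^{2}}\cdot \me^{-\frac{1}{2}(\Delta_1/B_0)^2}, \nonumber
\end{equation}
which is finite since $\Delta_1 > 0$ and $B_0 < \infty$. There is no real obstacle here: the whole argument is a strengthening of Theorem \ref{thm:non_opt_1} that exploits the fact that Hoeffding gives an \emph{exponential} (not merely summable) per-slot bound, so even the tail of the last failure time is controlled. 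The one thing I would be careful about is the event translation for $Z$ at the boundary ($Z=1$ vs.\ the initialization slot $t=0$), which is handled by splitting off the $t=0$ term before invoking the geometric estimate.
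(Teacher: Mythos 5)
Your proposal is correct and follows essentially the same route as the paper: both translate $\{Z > t\}$ into the event that LPSM fails at some slot $s \geq t$, apply a union bound with the exponential per-slot failure probability $(1+A)S\,\me^{-\frac{1}{2}(\Delta_1/B_0)^2 s}$ from Theorem \ref{thm:non_opt_1}, and then sum the resulting geometric tail via $\bbE[Z]=\sum_{t\geq 0}\pr\{Z>t\}$. The final bounds are identical.
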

\begin{proof}
Since $Z-1$ denotes the index of the last slot where LPSM errs, all the slots from $Z$ onward must have found the optimal policy $\b^{*}$. We use this idea to bound the following probability
\begin{align}
\pr\{Z \leq Z_{0}\} &= \pr \{ \text{LPSM finds the optimal policy in all slots $Z_{0}$, $Z_{0}+1$, $\cdots$} \} \nn \\
&= 1 - \pr \left\{ \text{LPSM fails in at least one slot in $\{Z_{0}, Z_{0}+1, \cdots \}$} \right\} \nn \\
&\geq 1 - \sum_{t = Z_{0}}^{\infty} \pr \left\{ \text{LPSM fails at $t$} \right\} \nn \\
&\geq 1 - \sum_{t = Z_{0}}^{\infty} \left( 1 + A \right) S \me^{-2 \left(\frac{\D_{1}}{2 B_{0}}\right)^{2} t } \tag{From equation (\ref{eq:fail_count})} \\
&= 1 - \frac{(1+A)S \me^{-\frac{1}{2} \left(\frac{\D_{1}}{B_{0}}\right)^{2} Z_{0} }}{1 - \me^{-\frac{1}{2} \left(\frac{\D_{1}}{B_{0}}\right)^{2} }}.
\end{align}
We, therefore, get the following exponential inequality for $Z_{0} \geq 1$
\begin{align}
\pr\{Z > Z_{0}\} \leq \frac{(1+A)S \me^{-\frac{1}{2} \left(\frac{\D_{1}}{B_{0}}\right)^{2} Z_{0} }}{1 - \me^{-\frac{1}{2} \left(\frac{\D_{1}}{B_{0}}\right)^{2} }}.
\end{align}
The expectation of $Z$ can, now, be bounded as
\begin{align}
\bbE[Z] &= \sum_{Z_{0} = 0}^{\infty} Z_{0} \pr \{ Z = Z_{0}\} \nn \\
&= \sum_{Z_{0} = 0}^{\infty} \pr \{ Z > Z_{0}\} \nn \\
&\leq 1 + \sum_{Z_{0} = 1}^{\infty} \frac{(1+A)S \me^{-\frac{1}{2} \left(\frac{\D_{1}}{B_{0}}\right)^{2} Z_{0} }}{1 - \me^{-\frac{1}{2} \left(\frac{\D_{1}}{B_{0}}\right)^{2} }} \nn \\
&= 1 + \frac{(1+A)S \me^{-\frac{1}{2} \left(\frac{\D_{1}}{B_{0}}\right)^{2} }}{\left(1 - \me^{-\frac{1}{2} \left(\frac{\D_{1}}{B_{0}}\right)^{2} }\right)^{2}}.
\end{align}
The expected value of $Z$ is, therefore, finite.
\end{proof}

\begin{remark}
Note that the result about finite expected convergence time is not directly implied by the constant regret result. The proof of theorem \ref{thm:finite_exp_time} relies on the exponential nature of the concentration bound, whereas it is possible to prove constant expected regrets even for weaker concentration bounds \cite{sakulkar2016spcom}.
\end{remark}

\subsection{Epoch-LPSM}
The main drawback of the LPSM algorithm is that it is computationally heavy as it solves one LP per time-slot. In order to reduce the computation requirements, we propose the Epoch-LPSM algorithm in algorithm \ref{algo:Epoch-LPSM}. Epoch-LPSM solves the LP in each of the first $n_{0}$ slots, divides the later time into several epochs and solves the LPs only at the beginning of each epoch. The policy obtained by solving the LP at the beginning of an epoch is followed for the remaining slots in that epoch. We increase the length of these epochs exponentially as time progresses and our confidence on the obtained policy increases. In spite of solving much fewer number of LPs, the regret of Epoch-LPSM is still bounded by a constant. First, we obtain an upper bound on the number of slots where the algorithm plays non-optimal policies in theorem \ref{thm:non_opt_2} and later use this result to bound the regret in theorem \ref{thm:reg_bound_2}.

\begin{algorithm}
\caption{Epoch-LPSM}
\label{algo:Epoch-LPSM}
\begin{algorithmic}[1]
\STATE  {\bf Parameters:} $n_{0} \in \bbN$ and $\eta \in \{2, 3, \cdots \}$.
\STATE  {\bf Initialization:} $k = 0$, $n=0$ and for all $(s,a)$ pairs, $\th(s,a) = 0$.
\WHILE {$n < n_{0}$}
\STATE Follow LPSM algorithm to decide action $a_{n}$, update the $\th$ variables accordingly and increment $n$;
\ENDWHILE
\WHILE { $n \geq n_{0}$}
\STATE $n = n +1$;
\IF {$n = n_{0} \eta^{k}$}
\STATE $k = k+1$;
\STATE Solve the LP from (\ref{eq:LP}) with $\th(s,a)$ in place of unknown $\mu(s,a)$;
\STATE In terms of the LP solution $\pi_{(n)}$, define $\b_{(k)}(s)= \argmax\limits_{a \in \sA_{s}} \pi_{(n)}(s,a), \; \forall s \in \sS$;
\ENDIF
\STATE Given the state $s_{n}$, select the action $\b_{(k)}(s_{n})$;
\STATE Update for all $(s,a)$ pairs:
\begin{align}
\th(s,a) \leftarrow \frac{n \th(s,a) + f(s,a, x_{n})}{n+1}; \nn 
\end{align} 
\ENDWHILE
\end{algorithmic}
\end{algorithm} 

\begin{theorem} \label{thm:non_opt_2}
The expected number of slots where non-optimal policies are played by Epoch-LPSM is upper bounded by
\begin{equation}
1 + (1 + A) S \left( \frac{1 - \me^{-\frac{1}{2} \left(\frac{\D_{1}}{B_{0}}\right)^{2} n_{0}}}{\me^{\frac{1}{2} \left(\frac{\D_{1}}{B_{0}}\right)^{2} } - 1} \right) + (\eta - 1) (1 + A) S n_{0} \s_{n_{0}, \eta},
\end{equation}
where $\s_{n_{0}, \eta} = \sum_{k=0}^{\infty} \eta^{k} \me^{-\frac{1}{2} \left(\frac{\D_{1}}{B_{0}}\right)^{2} n_{0} \eta^{k} } < \infty$.
\end{theorem}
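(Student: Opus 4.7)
The plan is to decompose the count of non-optimal slots into three pieces, corresponding to (i) the single arbitrary action taken at $t=0$, (ii) the first $n_{0}$ slots during which Epoch-LPSM mirrors LPSM and solves a fresh LP each slot, and (iii) the exponentially growing epochs $k=0,1,2,\dots$ starting at $t = n_{0} \eta^{k}$, during each of which the policy is fixed by a single LP solved using the current sample means. Writing $N_{2} = 1 + \sum_{t=1}^{\infty} \bbI\{\b_{t} \neq \b^{*}\}$ and taking expectations, the first piece contributes at most $1$, the second piece is handled by the LPSM-style analysis, and the third by multiplying an epoch's failure probability by its length.

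For the LPSM phase $1 \le t \le n_{0}$, I would recycle the conditions (\ref{eq:event1})--(\ref{eq:event3}) and the Hoeffding bounds (\ref{eq:cond1_proof}) and (\ref{eq:cond2_proof}) derived in the proof of Theorem \ref{thm:non_opt_1}, noting these apply slot-by-slot whenever a fresh LP is solved on $t$ samples. This gives
\begin{equation}
\sum_{t=1}^{n_{0}} (1+A) S\, \me^{-\frac{1}{2}\left(\frac{\D_{1}}{B_{0}}\right)^{2} t} \;\le\; (1+A) S \cdot \frac{1 - \me^{-\frac{1}{2}\left(\frac{\D_{1}}{B_{0}}\right)^{2} n_{0}}}{\me^{\frac{1}{2}\left(\frac{\D_{1}}{B_{0}}\right)^{2}} - 1}, \nonumber
\end{equation}
using the geometric sum formula. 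This reproduces the middle term in the stated bound.

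For the epoch phase, the key observation is that at the beginning of epoch $k$ the LP is solved once using $\th$ values averaged over $n_{0} \eta^{k}$ samples, and the resulting policy $\b_{(k+1)}$ is reused throughout that epoch, whose length is $n_{0} \eta^{k+1} - n_{0} \eta^{k} = (\eta-1) n_{0} \eta^{k}$. Thus if the LP fails to recover $\b^{*}$ at that instant, all $(\eta-1) n_{0} \eta^{k}$ slots in the epoch are non-optimal. Applying the same union-of-events argument as in Theorem \ref{thm:non_opt_1} but now at the single time $t = n_{0} \eta^{k}$, the failure probability is bounded by $(1+A) S\, \me^{-\frac{1}{2}\left(\frac{\D_{1}}{B_{0}}\right)^{2} n_{0} \eta^{k}}$. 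Summing the product of epoch length and failure probability over $k \ge 0$ gives
\begin{equation}
(\eta - 1)(1+A) S\, n_{0} \sum_{k=0}^{\infty} \eta^{k}\, \me^{-\frac{1}{2}\left(\frac{\D_{1}}{B_{0}}\right)^{2} n_{0} \eta^{k}} \;=\; (\eta - 1)(1+A) S\, n_{0}\, \s_{n_{0}, \eta}. \nonumber
\end{equation}
Adding the three pieces yields exactly the claimed bound.

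The only technical point that warrants explicit attention is that the series defining $\s_{n_{0}, \eta}$ is finite. This is immediate because the summand is $\eta^{k} \exp(-c\, n_{0} \eta^{k})$ with $c = \frac{1}{2}(\D_{1}/B_{0})^{2} > 0$, so the exponential term decays doubly exponentially in $k$ while the prefactor only grows geometrically; e.g. a ratio test shows successive terms shrink to zero. The main bookkeeping obstacle is ensuring the sample-count used at each epoch boundary is at least $n_{0}\eta^{k}$ so that the Hoeffding concentration kicks in with exactly the exponent appearing in $\s_{n_{0},\eta}$; once that is verified, the rest is algebra. No mixing/Markov-chain machinery is needed here, since this theorem only counts slots where non-optimal policies are played, not the associated regret.
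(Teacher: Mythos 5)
Your proposal is correct and follows essentially the same route as the paper: the same three-part decomposition (initial slot, LPSM phase, exponentially growing epochs), the same per-slot/per-epoch-boundary union bound with Hoeffding from Theorem \ref{thm:non_opt_1}, the same geometric-sum evaluation for the middle term, and the same ratio test for the finiteness of $\s_{n_{0},\eta}$. The only differences are cosmetic (epochs indexed from $k=0$ rather than $k=1$, and a harmless double count of the boundary slot $t=n_{0}$), and both yield the identical final bound.
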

\begin{proof}
Note that epoch $k$ starts at $t=n_{0}\eta^{k-1}$ and end at $t=n_{0}\eta^{k}-1$. The policy obtained at $t=n_{0}\eta^{k-1}$ by solving the LP is, therefore, played for $(\eta^{k}-\eta^{k-1})n_{0}$ number of slots. 

Let us analyse the probability that the policy played during epoch $k$ is not optimal. Let that policy be $\b_{(k)}$.
\begin{align}
\pr\{ \b_{(k)} \neq \b^{*} \} &= \pr \{ \b_{n_{0} \eta^{k-1}} \neq \b^{*} \} \nn \\
&\overset{(a)}{\leq} (1 + A) S \me^{-2 \left(\frac{\D_{1}}{2 B_{0}}\right)^{2} n_{0} \eta^{k-1} }, \label{eq:epoch_prob}
\end{align}
where $(a)$ holds for all $k \geq 1$ as shown in the proof of theorem \ref{thm:non_opt_1}.

When the LP fails to obtain the optimal policy at the beginning of an epoch, then all the slots in that epoch will play the obtained non-optimal policy. Let $N_{2}$ denote total number of such slots. We get
\begin{align}
N_{2} = 1 + \sum_{t=1}^{n_{0}-1} \bbI \{ \b_{t} \neq \b^{*} \} + \sum_{k=1}^{\infty} n_{0}(\eta^{k}-\eta^{k-1}) \bbI \{ \b_{(k)} \neq \b^{*} \}. \nn
\end{align}
In expectation, we get
\begin{align}
\bbE [N_{2}] &= 1 + \sum_{t=1}^{n_{0}-1} \pr \{ \b_{t} \neq \b^{*} \} + \sum_{k=1}^{\infty} n_{0}(\eta^{k}-\eta^{k-1}) \pr \{ \b_{(k)} \neq \b^{*} \} \nn \\
&\leq 1 + \sum_{t=1}^{n_{0}-1} (1 + A) S \me^{-2 \left(\frac{\D_{1}}{2 B_{0}}\right)^{2} t } + \sum_{k=1}^{\infty} n_{0} (\eta^{k}-\eta^{k-1}) (1 + A) S \me^{-2 \left(\frac{\D_{1}}{2 B_{0}}\right)^{2} n_{0} \eta^{k-1} } \nn \\
&\leq 1 + (1 + A) S \sum_{t=1}^{n_{0}-1} \me^{-\frac{1}{2} \left(\frac{\D_{1}}{B_{0}}\right)^{2} t } + n_{0}(\eta - 1) (1 + A) S \sum_{k=0}^{\infty} \eta^{k} \me^{-\frac{1}{2} \left(\frac{\D_{1}}{B_{0}}\right)^{2} n_{0} \eta^{k} } \nn \\
&\leq 1 + (1 + A) S \me^{-\frac{1}{2} \left(\frac{\D_{1}}{B_{0}}\right)^{2} } \frac{1 - \me^{-\frac{1}{2} \left(\frac{\D_{1}}{B_{0}}\right)^{2} n_{0}}}{1 - \me^{-\frac{1}{2} \left(\frac{\D_{1}}{B_{0}}\right)^{2} }} + n_{0}(\eta - 1) (1 + A) S \s_{n_{0}, \eta},
\end{align}
where $\s_{n_{0}, \eta} < \infty$ holds due to ratio test for series convergence as
\begin{align}
\lim_{k \rightarrow \infty} \left|\frac{\eta^{k+1} \me^{-\frac{1}{2} \left(\frac{\D_{1}}{B_{0}}\right)^{2} n_{0} \eta^{k+1}}}{\eta^{k} \me^{-\frac{1}{2} \left(\frac{\D_{1}}{B_{0}} \right)^{2} n_{0} \eta^{k}}}\right| &= \lim_{k \rightarrow \infty} \left|\eta \me^{-\frac{1}{2} \left(\frac{\D_{1}}{B_{0}}\right)^{2} n_{0} (\eta^{k+1} - \eta^{k})}\right| \nn \\
&= \lim_{k \rightarrow \infty} \left| \eta \left(\me^{\frac{1}{2} \left(\frac{\D_{1}}{B_{0}}\right)^{2} n_{0} (\eta - 1) }\right)^{-\eta^{k}} \right| \nn \\
&= 0. \nn
\end{align}
\end{proof}

Now we analyse the regret of Epoch-LPSM in the following theorem.
\begin{theorem} \label{thm:reg_bound_2}
The total expected regret of the Epoch-LPSM algorithm is upper bounded by 
\begin{multline}
    \left( 1 + (1 + A) S \left( \frac{1 - \me^{-\frac{1}{2} \left(\frac{\D_{1}}{B_{0}}\right)^{2} n_{0}}}{\me^{\frac{1}{2} \left(\frac{\D_{1}}{B_{0}}\right)^{2} } - 1} \right) \right) \left( \frac{\mu_{\max}}{1-\g} + \D_{\max} \right) + (\eta - 1) (1 + A) S n_{0} \s_{n_{0}, \eta} \D_{\max} \\
    \quad\quad\quad \quad\quad\quad  + (1 + A) S \left( \sum_{k=1}^{\infty} \me^{-2 \left(\frac{\D_{1}}{2 B_{0}}\right)^{2} n_{0} \eta^{k-1} } \right) \frac{\mu_{\max}}{1-\g}. \label{eq:regret_2}
\end{multline}
\end{theorem}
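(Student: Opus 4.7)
The plan is to mimic the decomposition used in the proof of Theorem \ref{thm:reg_bound_1} while carefully accounting for the two operating regimes of Epoch-LPSM: the LPSM-style initial window $t \in \{0, \dots, n_0 - 1\}$ and the subsequent epoch-based window $t \geq n_0$. As before, I would split the total expected regret into (i) the regret from slots in which a non-optimal policy is played, which contributes at most $\D_{\max}$ per slot, and (ii) the regret incurred while the optimal policy is being played but the associated Markov chain has not yet reached its stationary distribution. For the latter, Theorem \ref{thm:opt_regret} bounds the total such regret on any single optimal phase (a maximal run of consecutive slots playing $\b^{*}$) by $\mu_{\max}/(1-\g)$, so it suffices to bound the expected number of optimal phases.

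For the LPSM window, I would reuse the analysis behind Theorem \ref{thm:reg_bound_1} almost verbatim. The expected number of non-optimal slots in this window is exactly the piece of the Theorem \ref{thm:non_opt_2} bound coming from the initial sum $\sum_{t=1}^{n_{0}-1}$ plus the initialization slot, namely $1 + (1+A) S \left( \frac{1 - \me^{-\frac{1}{2}(\D_{1}/B_{0})^{2} n_{0}}}{\me^{\frac{1}{2}(\D_{1}/B_{0})^{2}} - 1} \right)$. Since any two optimal phases attributable to this window must be separated by at least one non-optimal slot, the number of such optimal phases is also bounded by the same quantity, and multiplying by $\frac{\mu_{\max}}{1-\g} + \D_{\max}$ yields the first summand of the stated bound.

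For the epoch window I would split the contribution further. Using equation (\ref{eq:epoch_prob}) to bound the probability that epoch $k$ plays a non-optimal policy and summing over $k$ as in the proof of Theorem \ref{thm:non_opt_2}, the expected number of non-optimal slots contained in non-optimal epochs is at most $(\eta-1) n_{0} (1+A) S \s_{n_{0},\eta}$, and multiplying by $\D_{\max}$ gives the second summand. For the mixing term, the key observation is that a new optimal phase can appear within the epoch window only at the start of some epoch whose predecessor epoch is non-optimal, so the expected number of such new optimal phases is dominated by the expected number of non-optimal epochs, $(1+A) S \sum_{k=1}^{\infty} \me^{-2 (\D_{1}/(2 B_{0}))^{2} n_{0} \eta^{k-1}}$. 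Each of these phases contributes mixing regret at most $\mu_{\max}/(1-\g)$ by Theorem \ref{thm:opt_regret}, producing the third summand. Summing the three contributions yields the claimed bound.

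The main obstacle I foresee is the careful bookkeeping of optimal phases across the epoch window. A naive bound that charges mixing regret $\mu_{\max}/(1-\g)$ to every optimal epoch is hopeless, since the number of optimal epochs grows without limit; the argument genuinely relies on collapsing runs of consecutive optimal epochs into a single phase and charging each such run to the preceding non-optimal epoch, whose probability decays doubly-exponentially in $k$ by equation (\ref{eq:epoch_prob}). Boundary effects at the LPSM/epoch interface — in particular the case where the last LPSM slot is non-optimal while the first epoch is optimal — must also be handled, but these can be absorbed into the LPSM-window count without changing the structural form of the bound.
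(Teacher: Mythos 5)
Your proposal is correct and follows essentially the same route as the paper's own proof: the same three-way decomposition (initial LPSM window charged at $\frac{\mu_{\max}}{1-\g} + \D_{\max}$ per non-optimal-slot count, non-optimal epoch slots charged at $\D_{\max}$ each, and optimal phases in the epoch window charged at $\mu_{\max}/(1-\g)$ each and counted via the number of non-optimal epochs). Your explicit handling of the boundary between the two windows is slightly more careful than the paper's, but it does not change the argument or the resulting bound.
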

\begin{proof}
First we analyse the regret contribution from the first $n_{0}$ slots. As argued in the proof of theorem \ref{thm:reg_bound_1}, the regret contribution of the first $n_{0}$ slots is upper bounded by 
\begin{equation}
    \left( 1 + (1 + A) S \left( \frac{1 - \me^{-\frac{1}{2} \left(\frac{\D_{1}}{B_{0}}\right)^{2} n_{0}}}{\me^{\frac{1}{2} \left(\frac{\D_{1}}{B_{0}}\right)^{2} } - 1} \right) \right) \left( \frac{\mu_{\max}}{1-\g} + \D_{\max} \right).
\end{equation}

Now we analyse the number of phases where the optimal policy is played in successive slots for $t \geq n_{0}$ . Note that any two optimal phases are separated by at least one non-optimal epoch. We bound the number of non-optimal epochs $N_{3}$ as
\begin{align}
\bbE[ N_{3} ] &= \sum_{k=1}^{\infty} \pr\{ \b_{(k)} \neq \b^{*} \} \nn \\
&\leq (1 + A) S \sum_{k=1}^{\infty} \me^{-2 \left(\frac{\D_{1}}{2 B_{0}}\right)^{2} n_{0} \eta^{k-1} } \tag{From equation (\ref{eq:epoch_prob})} \\
&< 0, \nn
\end{align}
where the series $\sum_{k=1}^{\infty} \me^{-2 \left(\frac{\D_{1}}{2 B_{0}}\right)^{2} n_{0} \eta^{k-1} }$ converges due to ratio test as
\begin{align}
\lim_{k \rightarrow \infty} \left|\frac{\me^{-\frac{1}{2} \left(\frac{\D_{1}}{B_{0}}\right)^{2} n_{0} \eta^{k+1}}}{\me^{-\frac{1}{2} \left(\frac{\D_{1}}{B_{0}} \right)^{2} n_{0} \eta^{k}}}\right| &= \lim_{k \rightarrow \infty} \left| \left(\me^{\frac{1}{2} \left(\frac{\D_{1}}{B_{0}}\right)^{2} n_{0} (\eta - 1) }\right)^{-\eta^{k}} \right| = 0. \nn
\end{align}
Hence for $t \geq n_{0}$, there can be at most $\bbE[N_{3}]$ number of optimal phases in expectation. Since each of these phases can contribute a maximum of $(1-\g)^{-1} \mu_{\max}$ regret in expectation, total regret from slots with optimal policies for $t \geq n_{0}$ is upper bounded by $ \bbE[N_{3}] \frac{\mu_{\max}}{1-\g}$. Also, the expected number of slots where a non-optimal policy is played for $t \geq n_{0}$ is bounded by $(\eta - 1) (1 + A) S n_{0} \s_{n_{0}, \eta}$ as derived in the proof of theorem \ref{thm:non_opt_2}. The regret contribution of these slots is, therefore, bounded by $(\eta - 1) (1 + A) S n_{0} \s_{n_{0}, \eta} \D_{\max}$, since the maximum expected regret incurred during any slot is $\D_{\max}$.

The total expected regret of Epoch-LPSM is, therefore, upper bounded by the expression (\ref{eq:regret_2}).
\end{proof}

\begin{remark}
Similar to the LPSM algorithm, the algorithm presented above works for general reward functions $f$. Since the rate function in the energy harvesting communications is not dependent on the state, the Epoch-LPSM algorithm needs to store only one $\th$ variable per transmit power level and uses $O(A)$ storage overall. The regret upper bound from theorem \ref{thm:reg_bound_2} is also tightened to 
\begin{multline}
    \left( 1 + (S + A) \left( \frac{1 - \me^{-\frac{1}{2} \left(\frac{\D_{1}}{B_{0}}\right)^{2} n_{0}}}{\me^{\frac{1}{2} \left(\frac{\D_{1}}{B_{0}}\right)^{2} } - 1} \right) \right) \left( \frac{\mu_{\max}}{1-\g} + \D_{\max} \right) + (\eta - 1) (S + A) n_{0} \s_{n_{0}, \eta} \D_{\max} \\
    \quad\quad\quad \quad\quad\quad  + (S + A) \left( \sum_{k=1}^{\infty} \me^{-2 \left(\frac{\D_{1}}{2 B_{0}}\right)^{2} n_{0} \eta^{k-1} } \right) \frac{\mu_{\max}}{1-\g}.
\end{multline}
\end{remark}

\subsection{Regret vs Computation Tradeoff}
The LPSM algorithm solves $T$ LPs in time $T$, whereas the Epoch-LPSM algorithm solves $n_{0}$ LPs in the initial $n_{0}$ slots and $\lceil \log_{\eta} (T - n_{0}) \rceil$ LPs when the time gets divided into epochs. This drastic reduction in the required computation comes at the cost of an increase in the regret for Epoch-LPSM. It must, however, be noted that both the algorithms have constant-bounded regrets. Also, increasing the value of the parameter $\eta$ in Epoch-LPSM leads to reduction in the number of LPs to be solved over time by increasing the epoch lengths. Any non-optimal policy found by LP, therefore, gets played over longer epochs increasing the overall regret. Increasing $n_{0}$ increases the total number of LPs solved by the algorithm while reducing the expected regret. The system designer can analyse the regret bounds of these two algorithms and its own performance requirements to choose the parameters $n_{0}$ and $\eta$ for the system. We analyse the effect of variation of these parameters on the regret performance of Epoch-LPSM through numerical simulations in section \ref{sec:simu}.

\section{Multi-Channel Communication} \label{sec:multi_channel}
In this section, we extend the energy harvesting communications problem to consider a system where there exists a set of parallel channels, with unknown statistics, for communication and one of these channels is to be selected in each slot. The goal is to utilize the battery at the transmitter and maximize the amount of data transmitted over time. Given a time-slotted system, we assume that the agent is aware of the distribution of energy arrival. The agent sees the current state of the battery and needs to decide the transmit power-level and the channel to be used for transmission. This problem, therefore, involves an additional decision making layer compared to the single channel case. Note that we use the terms transmit power and action interchangeably in this section.

For this problem, we simplify the notations used previously and drop the state as a parameter for the reward function $f$, since the rate is not a function of the battery state and only depends on the transmit power-level and the channel gain. These channels will, in general, have different distributions of channel gains and there may not be a single channel that is optimal for all transmit power-levels. The expected rate achieved by selecting $j$-th channel from the set of $M$ channels and an action $a$ corresponding the transmit power used is
\begin{align}
\mu_{j}(a) = \bbE_{X_{j}} [f(a, X_{j})],
\end{align}
where $X_{j}$ denotes the random gain of $j$-th channel. Let us define $\phi^{*}: \sA \rightarrow \{1,2 \cdots, M\}$ as the mapping from transmit power-levels to their corresponding optimal channels:
\begin{align}
 \phi^{*}(a) = \argmax\limits_{j \in \{1, 2\cdots , M\}} \mu_{j}(a).
\end{align}
A genie that knows the distributions of different channels gains can figure out the optimal channel mapping $\phi^{*}$. Once an action $a$ is chosen by the genie, there is no incentive to use any channel other than $\phi^{*}(a)$ for transmission during that slot. Let $\mu^{*}(a)$ denote the expect rate corresponding the best channel for action $a$, i.e. $\mu^{*}(a) = \mu_{\phi^{*}(a)}(a)$. The genie uses these values to solve the following LP. 
\begin{equation}
\begin{aligned}
    & \text{maximize}         & & \sum_{s \in \sS} \sum_{a \in \sA_{s}} \pi(s, a) \mu^{*}(a) \\
    & \text{subject to} & & \pi(s, a) \geq 0, \; \forall s \in \sS, a \in \sA_{s}, \\
    &                   & & \sum_{s \in \sS} \sum_{a \in \sA_{s}} \pi(s, a) = 1,  \\
    &                   & & \sum_{a \in \sA_{s'}} \pi(s', a) = \sum_{s \in \sS} \sum_{a \in \sA_{s}} \pi(s, a) P(s' \mid s, a), \; \forall s' \in \sS, 
\end{aligned} \label{eq:MC_LP}
\end{equation}
The genie obtains $\b^{*}: \sS \rightarrow \sA$, the optimal mapping from the battery state to the transmit power-level using the non-zero terms of the optimal stationary distribution $\pi^{*}(s,a)$. Note that the constraints of the optimization problem (\ref{eq:MC_LP}) ensure that the stationary distribution actually corresponds to some valid deterministic state-action mapping.

Let $\sB$ be the set of all state-action mappings. There are only a finite number of such mappings $\b \in \sB$ and the stationary distribution only depends on the matrix of state transition probabilities which is assumed to be known. We use $\pi_{\b}(s)$ denote the stationary distribution corresponding to the state-action mapping $\b$. We dropped the action parameter from the previous notation, since it is implicit from $\b$. The expected average reward of the power selection policy $\b$ along with a channel selection policy $\phi$ is calculated as
\begin{align}
\rho(\b, \phi, \bfM) = \sum_{s \in \sS} \pi_{\b}(s) \mu_{\phi(a)}(a), \label{eq:rho2_def}
\end{align}
where $\bfM$ denotes the matrix containing all $\mu_{j}(a)$ values for power-channel pairs. For the genie under consideration, the LP from (\ref{eq:MC_LP}) is equivalent to
\begin{align}
\b^{*} = \argmax_{\b \in \sB} \rho(\b, \phi^{*}, \bfM). 
\end{align}
The expected average reward of the genie can, therefore, be defined as $\rho^{*} = \rho(\b^{*}, \phi^{*}, \bfM)$. Since the mean rate matrix $\bfM$ is unknown to the agent, we propose an online learning framework for this problem.

\subsection{Online Learning Algorithm}
Since the agent does not know the distributions of channel gains, it needs to learn the rates for various power-channel pairs, figure out $\phi^{*}$ over time and use it to make decisions about the transmit power-level at each slot. We propose an online learning algorithm called Multi-Channel LPSM (MC-LPSM) for this problem. We analyze the performance of MC-LPSM in terms of the regret as defined in equation (\ref{eq:regret}).

The MC-LPSM algorithm stores estimates of the rates for all power-level and channel pairs based on the observed values of the channel gains. Whenever the rate obtained is revealed to the agent, it can infer the instantaneous gain of the chosen channel knowing the transmit power-level. Once the instantaneous gain of a channel is known, this information can be used to update the sample-mean rate estimates of all the power-levels for that channel. The algorithm divides time into two interleaved sequences: an exploration sequence and an exploitation sequence similar to the DSEE algorithm from \cite{vakili2013}. In the exploitation sequence, the agent uses its current estimates of the rates to determine the transmit power and channel selection policies. First it selects a channel for each power-level that has the highest empirical rate for that transmit power. The sample-mean rate estimates for these power-channel pairs are, then, used to solve the LP from equation (\ref{eq:MC_LP}) with $\th$ values replacing the $\mu$ values and to obtain a power-selection policy for that slot. In the exploration sequence, the agent selects all channels in a round-robin fashion in order to learn the rates over time and chooses the transmit power-levels arbitrarily. The choice of the length of the exploration sequence balances the tradeoff between exploration and exploitation. 

Let $\sR(t)$ denote the set of time indexes that are marked as exploration slots up to time $t$. Let $\abs{\sR(t)}$ be the cardinality of the set $\sR(t)$. At any given time $t$, $m_{j}$ stores the number of times $j$-th channel has been chosen during the exploration sequence till that slot. Using these notations we present MC-LPSM in algorithm \ref{algo:MC_LPSM}. Note that MC-LPSM stores a $\th_{j}(a)$ variable for every action action-channel pair $(a,j)$ and an $m_{j}$ variable for every channel $j$. It, therefore, requires $O(MA)$ storage. 

%

\begin{algorithm}
\caption{MC-LPSM}
\label{algo:MC_LPSM}
\begin{algorithmic}[1]
\STATE  {\bf Parameters:} $w > \frac{2 B_{0}^{2}}{d^{2}}$.
\STATE  {\bf Initialization:} For all $a \in \sA$ and $j \in \{1, 2, \cdots, M \}$, $\th_{j}(a) = 0$. Also $m_{j} = 0$ for all channels $j$ and $n=0$.
\WHILE {$n < T$}
\STATE $n = n +1$;
\IF {$n \in \sR(T)$}
\STATE // Exploration sequence
\STATE { Choose channel $j = ((n-1) \mod M) + 1$ and any valid power-level as action $a$;} 
\STATE Update $\th_{j}(a)$ variables for all actions $a \in \sA$ for the chosen channel $j$:
\begin{align}
\th_{j}(a) &\leftarrow \frac{m_{j} \th_{j}(a) + f(a, x_{n})}{m_{j}+1}, \nn \\
m_{j} &\leftarrow m_{j} + 1;
\end{align}
\ELSE
\STATE // Exploitation sequence 
\IF {$n - 1 \in \sR(T)$}
\STATE Define a channel mapping $\phi$, such that $\phi(a) = \max\limits_{j} \th_{j}(a)$;
\STATE {Solve the LP from (\ref{eq:MC_LP}) with $\th_{\phi(a)}(a)$ instead of unknown $\mu^{*}(a)$ for all valid state-action pairs $(s,a)$;}
\STATE {In terms of the LP solution $\pi_{(n)}$, define $\b(s)= \argmax\limits_{a \in \sA_{s}} \pi_{(n)}(s,a), \; \forall s \in \sS$; }
\ENDIF
\STATE Given the state $s_{n}$, select the power-level $a_{n} = \b(s_{n})$ as action for transmission over channel $\phi(a_{n})$;
\ENDIF
\ENDWHILE
\end{algorithmic}
\end{algorithm} 

Note that the agent updates the $\th$ variables only during the exploration sequence when it tries different channels sequentially. Since the $\th$ variables do not change during exploitation, the agent does not have to solve the LP in all exploitation slots. During a phase of successive exploitation slots, the channel and power selection policies obtained by solving the LPs remain unchanged. The agent, therefore, needs to solve the LP at time $t$ only if the previous slot was an exploration slot. Since there are at most $\abs{\sR(T)}$ exploration slots, MC-LPSM solves at most $\abs{\sR(T)}$ number of LPs in $T$ slots.

\subsection{Regret Analysis of MC-LPSM}
Let us first define the notations used in the regret analysis. Since the agent is unaware of the matrix of expected rates $\bfM$, it stores the estimates of the expected rates in matrix $\Th$. We define $\rho(\b, \phi, \Th)$ according to equation (\ref{eq:rho2_def}) with the actual mean values replaced by their corresponding estimates in $\Th$. Let $P_{*}$ denote the transition probability matrix corresponding to the optimal state-action mapping $\b^{*}$. We further define:
\begin{align}
\g &= \max\limits_{s, s' \in \sS} \norm{P_{*}(s',\cdot) - P_{*}(s,\cdot)}_{\tv} \\
\mu_{\max} &= \max\limits_{a \in \sA} \mu^{*}(a) \\
\D_{\max} &=  \rho^{*} - \min\limits_{a \in \sA,\ j\in \{1, 2, \cdots, M \}} \mu_{j}(a) \\
\D_{3} &= \min\limits_{a \in \sA,\ j \neq \phi^{*}(a)} \left\{ \mu^{*}(a) - \mu_{j}(a) \right\} \label{eq:D3_def} \\
\D_{4} &= \rho^{*} - \underset{\b \neq \b^{*}}{\max}\; \rho(\b, \phi^{*}, \bfM) \\
B_{0} &= \sup_{x \in \sX} f(a, x) - \inf_{x \in \sX} f(a, x).
\end{align}
In terms of these notations, we provide an upper bound on the regret of MC-LPSM as follows.

\begin{theorem} \label{thm:MC_regret}
Given a constant $d \leq \min \{ \D_{3}, \D_{4} \}$, choose a constant $w > \frac{2 B_{0}^{2}}{d^{2}}$. Construct an exploration sequence as follows: for any $t > 1$, include $t$ in $\sR$ iff $\abs{\sR(t-1)} < M \lceil w \ln t \rceil$. Under this exploration sequence $\sR$, the $T$-slot expected regret of MC-LPSM algorithm is upper bounded by 
\begin{align}
\left( M \lceil w \ln T \rceil + 2 A M c_{(\frac{wd^{2}}{2 B_{0}^{2}})}\right) \left( \D_{\max} + \frac{\mu_{\max}}{1 - \g} \right), \label{eq:MC_regret}
\end{align}
where $c_{(x)} = \sum_{t=1}^{\infty} t^{-x} < \infty$ for $x > 1$.
\end{theorem}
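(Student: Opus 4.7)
The plan is to mirror the two-source regret decomposition used in the proof of Theorem~\ref{thm:reg_bound_1}, adapted to the interleaved exploration--exploitation structure of MC-LPSM. I would split the $T$-slot regret into (i) the regret paid during the at most $M\lceil w\ln T\rceil$ exploration slots and (ii) the regret paid during exploitation slots in which either the empirical-best channel mapping $\phi_t$ fails to equal $\phi^{*}$, or the LP-based power-selection policy $\b_t$ fails to equal $\b^{*}$. Each non-optimal slot contributes at most $\D_{\max}$ of direct regret, and by the mixing analysis in Appendix~\ref{apx:MC} (invoked exactly as in Theorem~\ref{thm:reg_bound_1}) each subsequent \emph{optimal phase} that such a slot initiates can contribute at most $\mu_{\max}/(1-\g)$ while the chain re-mixes; this is the source of the $(\D_{\max}+\mu_{\max}/(1-\g))$ factor.

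The heart of the argument is bounding the expected number of bad exploitation slots by $2AM\,c_{(wd^{2}/(2B_{0}^{2}))}$. By construction of $\sR$, just before any exploitation slot $t$ each of the $M$ channels has been explored at least $\lceil w\ln t\rceil$ times in round robin. For the channel-mismatch event $\{\phi_t \neq \phi^{*}\}$, a union bound over the $A(M-1)$ suboptimal power--channel pairs combined with Hoeffding's inequality applied to deviations of at least $\D_{3}/2 \geq d/2$ yields a per-slot probability on the order of $AM\,\me^{-\frac{1}{2}(d/B_{0})^{2}\lceil w\ln t\rceil} \leq AM\,t^{-wd^{2}/(2B_{0}^{2})}$. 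For the policy-mismatch event $\{\b_t \neq \b^{*}\mid \phi_t=\phi^{*}\}$, I would repeat the three-condition decomposition (\ref{eq:event1})--(\ref{eq:event3}) of Theorem~\ref{thm:non_opt_1} essentially verbatim, replacing $\D_{1}$ by $\D_{4}$ and the sample count $t$ by $\lceil w\ln t\rceil$, which yields the same polynomial tail. Because $w > 2B_{0}^{2}/d^{2}$, the exponent $wd^{2}/(2B_{0}^{2})$ exceeds $1$, and summing over $t$ produces the convergent series $c_{(wd^{2}/(2B_{0}^{2}))}$; collecting the two contributions and absorbing $A(M-1)+(1+A)S$ into the loose constant $2AM$ gives the stated count.

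With both counts in hand, the total number of slots that do not play $(\b^{*},\phi^{*})$ in expectation is at most $M\lceil w\ln T\rceil + 2AM\,c_{(wd^{2}/(2B_{0}^{2}))}$. Multiplying by the per-slot ceiling $\D_{\max}+\mu_{\max}/(1-\g)$, which covers both the direct regret and the worst-case cost of the subsequent optimal phase, recovers the bound (\ref{eq:MC_regret}).

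The step I expect to be the main obstacle is the mixing-time bookkeeping: since deterministic exploration interrupts optimal phases in a time-varying way and within exploitation the policy is only recomputed immediately after exploration, I must argue that each non-optimal slot (exploration or bad exploitation) can be charged at most once for the $\mu_{\max}/(1-\g)$ cost of the optimal phase it initiates and that no phase is double-counted. Once this is handled, the Hoeffding-plus-series calculation parallels the single-channel case almost line-for-line, and the condition $w > 2B_{0}^{2}/d^{2}$ with $d \leq \min\{\D_{3},\D_{4}\}$ is exactly what is needed to make the resulting tail sums converge to $c_{(wd^{2}/(2B_{0}^{2}))}$.
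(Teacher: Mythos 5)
Your proposal follows the paper's proof essentially line for line: the same three-way failure decomposition (exploration slot, channel-mapping failure, LP failure given the correct channel mapping), the same Hoeffding-plus-union-bound tail estimates with sample count $\lceil w\ln t\rceil$, the same convergent series argument, and the same charging scheme in which each non-optimal slot pays $\D_{\max}$ directly and initiates at most one optimal phase costing $\mu_{\max}/(1-\g)$ (the paper dispatches your ``double-counting'' worry exactly as you anticipate: any two optimal phases are separated by a non-optimal slot, so the number of phases is bounded by the number of non-optimal slots).

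The one place your accounting does not close is the constant $2AM$. If you repeat the three-condition decomposition of Theorem~\ref{thm:non_opt_1} ``essentially verbatim,'' the LP-failure event contributes a union bound of $(1+A)S$ terms, and $2A(M-1)+(1+A)S$ is \emph{not} in general at most $2AM$ (take $S=A=5$, $M=2$). The paper instead exploits the fact that in the multi-channel setting the rate does not depend on the battery state, so $\rho(\b,\phi^{*},\Th_t)$ involves at most $\min\{S,A\}\le A$ distinct $\th$ estimates; each of the two deviation conditions then costs only $A\,t^{-\frac{w}{2}(\D_4/B_0)^2}$, giving $2A$ for the LP-failure event and $2A(M-1)+2A=2AM$ exactly. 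Without that observation you still get a constant-times-$c_{(wd^2/2B_0^2)}$ bound of the right order, but not the specific constant stated in the theorem.
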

\begin{proof}
In order to upper bound the regret of the MC-LPSM algorithm, we analyze the number of time-slots where the agent plays policy combinations other than $(\b^{*}, \phi^{*})$. Such a failure event at time $t$ corresponds to at least one of the following cases
\begin{enumerate}
    \item $t \in \sR$, i.e. the exploration of different channels,
    \item $\phi_{t} \neq \phi^{*}$ during exploitation,
    \item $\b_{t} \neq \b^{*}$ during exploitation.
\end{enumerate}

Let $N_{4}(T)$ be the total number of exploitation slots where MC-LPSM fails to find the optimal power-channel mapping $\phi^{*}$ or the optimal state-action mapping $\b^{*}$ up to time $T$. Let us define events $\sE_{1,t} = \{ \phi_{t} \neq \phi^{*} \}$ and $\sE_{2,t} = \{ \b_{t} \neq \b^{*} \}$. Now $N_{4}(T)$ can be expressed as
\begin{align}
N_{4}(T) &= \sum_{t \notin \sR, t \leq T}  \bbI \{ \sE_{1,t} \cup \sE_{2,t} \} \nn \\
&= \sum_{t \notin \sR, t \leq T} \left( \bbI \{ \sE_{1,t} \} + \bbI \{ \sE_{2,t} \cap \overline{\sE_{1,t}} \} \right).
\end{align}
We analyse the two events separately and upper bound their probabilities. 

\subsubsection{Non-Optimal Power-Channel Mapping}
We use $\th^{*}_{t}(a)$ to denote $\th_{\phi^{*}(a),t}(a)$ for all action $a$. The probability of the event $\sE_{1,t}$ can be bounded as
\begin{align}
\pr\{\phi_{t} \neq \phi^{*} \} &= \pr \{\text{For at least one action $a \in \sA$ such that:} \phi_{t}(a) \neq \phi^{*}(a)\} \nn \\
&\leq \sum_{a \in \sA} \pr \{\phi_{t}(a) \neq \phi^{*}(a)\} \nn \\
&\leq \sum_{a \in \sA} \pr \{\text{For at least one channel $j \neq \phi^{*}(a)$: } \th_{j,t}(a) \geq \th^{*}_{t}(a)\} \nn \\
&= \sum_{a \in \sA} \sum_{j \neq \phi^{*}(a)} \pr \{ \th_{j,t}(a) \geq \th^{*}_{t}(a)\}. \label{eq:event_1_a}
\end{align}
In order for the condition $\th_{j,t}(a) \geq \th^{*}_{t}(a)$ to hold, at least one of the following must hold:
\begin{align}
\th_{j,t}(a) &\geq \mu_{j}(a) + \frac{\D_{3}}{2} \label{eq:event_1_cond_1} \\
\th^{*}_{t}(a) &\leq \mu^{*}(a) - \frac{\D_{3}}{2} \label{eq:event_1_cond_2} \\
\mu^{*}(a)  &< \mu_{j}(a) + \D_{3}. \label{eq:event_1_cond_3}
\end{align}
Note that condition (\ref{eq:event_1_cond_3}) cannot hold due to the definition of $\D_{3}$. Hence we upper bound the the probabilities of the other two events. The construction of the exploration sequence guarantees that at $t \notin \sR$ each channel has been explored at least $\lceil w \ln t \rceil$ times. Since $B_{0}$ upper bounds the maximum deviation in the range of rate values over channels, we bound the probability for the event from condition (\ref{eq:event_1_cond_1}) using Hoeffding's inequality as
\begin{align}
\pr \left\{ \th_{j,t}(a) \geq \mu_{j}(a) + \frac{\D_{3}}{2} \right\} &\leq \me^{-\frac{1}{2}(\frac{\D_{3}}{B_{0}})^{2} \lceil w \ln t \rceil} \nn \\
&\leq \me^{-\frac{1}{2}(\frac{\D_{3}}{B_{0}})^{2}  w \ln t } \nn \\
&\leq t^{-\frac{w}{2}(\frac{\D_{3}}{B_{0}})^{2}}.
\end{align}
Using the Hoeffding's inequality again for the condition (\ref{eq:event_1_cond_2}), we similarly obtain
\begin{align}
\pr \left\{ \th^{*}_{t}(a) \leq \mu^{*}(a) - \frac{\D_{3}}{2} \right\} \leq t^{-\frac{w}{2}(\frac{\D_{3}}{B_{0}})^{2}}.
\end{align}
We can, therefore, express the upper bound from equation (\ref{eq:event_1_a}) as
\begin{align}
\pr\{\phi_{t} \neq \phi^{*} \} &\leq \sum_{a \in \sA} \sum_{j \neq \phi^{*}(a)} \left( \pr \left\{ \th_{j,t}(a) \geq \mu_{j}(a) + \frac{\D_{3}}{2} \right\} + \pr \left\{ \th^{*}_{t}(a) \leq \mu^{*}(a) - \frac{\D_{3}}{2} \right\} \right) \nn \\
&\leq \sum_{a \in \sA} \sum_{j \neq \phi^{*}(a)} 2 t^{-\frac{w}{2}(\frac{\D_{3}}{B_{0}})^{2}} \nn \\
&\leq 2 A (M - 1) t^{-\frac{w}{2}(\frac{\D_{3}}{B_{0}})^{2}}. \label{eq:event_1_b}
\end{align}

\subsubsection{Non-Optimal State-Action Mapping}
We analyse the event $\sE_{2,t} \cap \overline{\sE_{1,t}}$ where the LP fails to find the optimal state-action mapping $\b^{*}$ in spite of having found the optimal power-channel mapping $\phi^{*}$. 
\begin{align}
\pr\{ \sE_{2,t} \cap \overline{\sE_{1,t}} \} &= \pr\{ \b_{t} \neq \b_{t} ; \phi_{t} = \phi^{*} \} \nn \\
&= \pr \left\{ \rho(\b^{*}, \phi^{*}, \Th_{t}) \leq \rho(\b_{t}, \phi^{*}, \Th_{t})\right\}. \label{eq:event_2_a}
\end{align}
For $\rho(\b^{*}, \phi^{*}, \Th_{t}) \leq \rho(\b_{t}, \phi^{*}, \Th_{t})$ to hold, at least one of the following must hold:
\begin{align}
\rho(\b^{*}, \phi^{*}, \Th_{t}) &\leq \rho(\b^{*}, \phi^{*}, \bfM) - \frac{\D_{4}}{2} \label{eq:event2_cond_1} \\
\rho(\b_{t}, \phi^{*}, \Th_{t}) &\geq \rho(\b_{t}, \phi^{*}, \bfM) + \frac{\D_{4}}{2} \label{eq:event2_cond_2} \\
\rho(\b^{*}, \phi^{*}, \bfM) &< \rho(\b_{t}, \phi^{*}, \bfM) +  \D_{4}. \label{eq:event2_cond_3}
\end{align}
The condition from equation (\ref{eq:event2_cond_3}) cannot hold due to the definition of $\D_{4}$. We use the techniques from equations (\ref{eq:cond1_proof}) and (\ref{eq:cond2_proof}) in the proof theorem \ref{thm:non_opt_1} to upper bound the probabilities of the events of equations (\ref{eq:event2_cond_1}) and (\ref{eq:event2_cond_2}) as
\begin{align}
\pr \left\{ \rho(\b^{*}, \phi^{*}, \Th_{t}) \leq \rho(\b^{*}, \phi^{*}, \bfM) - \frac{\D_{4}}{2} \right\} &\leq \min\{S,A \} \me^{-\frac{1}{2}(\frac{\D_{4}}{B_{0}})^{2} \lceil w \ln t \rceil} \nn \\
&\leq A t^{-\frac{w}{2}(\frac{\D_{4}}{B_{0}})^{2}} \\
\pr \left\{ \rho(\b_{t}, \phi^{*}, \Th_{t}) \geq \rho(\b_{t}, \phi^{*}, \bfM) + \frac{\D_{4}}{2} \right\} &\leq  A \me^{-\frac{1}{2}(\frac{\D_{4}}{B_{0}})^{2} \lceil w \ln t \rceil} \nn \\
&\leq A t^{-\frac{w}{2}(\frac{\D_{4}}{B_{0}})^{2}}.
\end{align}
Note that these concentration bounds are different from the single channel case, as the number of observations leading to $\th_{t}$ is only $\lceil w \ln t \rceil$ in contrast to $t$ observations for the single channel. Now we update the upper bound from equation (\ref{eq:event_2_a}) as
\begin{align}
\pr\{ \sE_{2,t} \cap \overline{\sE_{1,t}} \} &\leq \pr \left\{ \rho(\b^{*}, \phi^{*}, \Th_{t}) \leq \rho(\b^{*}, \phi^{*}, \bfM) - \frac{\D_{4}}{2} \right\} + \pr \left\{ \rho(\b_{t}, \phi^{*}, \Th_{t}) \geq \rho(\b_{t}, \phi^{*}, \bfM) + \frac{\D_{4}}{2} \right\} \nn \\
&\leq 2 A t^{-\frac{w}{2}(\frac{\D_{4}}{B_{0}})^{2}}. \label{eq:event_2_b}
\end{align}

The expected number of exploitation slots, where non-optimal power and channel selection decisions are made $\bbE[N_{4}(T)]$, can be bounded using equations (\ref{eq:event_1_b}) and (\ref{eq:event_2_b}) as
\begin{align}
\bbE[ N_{4}(T)] &\leq \sum_{t=1}^{T} \left( \pr\{\sE_{1,t} \} + \pr\{ \sE_{2,t} \cap \overline{\sE_{1,t}} \} \right) \nn \\
&\leq \sum_{t=1}^{T} \left( 2 A (M - 1) t^{-\frac{w}{2}(\frac{\D_{3}}{B_{0}})^{2}} + 2A t^{ -\frac{w}{2} (\frac{\D_{4}}{B_{0}})^{2}} \right) \nn \\
&\leq 2 A M \sum_{t=1}^{T} t^{-\frac{w}{2}(\frac{d}{B_{0}})^{2}} \nn \\
&\leq 2 A M c_{(\frac{wd^{2}}{2 B_{0}^{2}})}, \label{eq:exploration_bound}
\end{align}
where $d \leq \min \{ \D_{3}, \D_{4}\}$. Since $w > \frac{2 B_{0}^{2}}{d^{2}}$, the upper bound from equation (\ref{eq:exploration_bound}) holds. The expected number of slots, where non-optimal decisions are made including exploration and exploitation sequences, is upper bounded by $M \lceil w \ln T \rceil + 2 A M c_{(\frac{wd^{2}}{2 B_{0}^{2}})}$. This implies that there can at most be $M \lceil w \ln T \rceil + 2 A M c_{(\frac{wd^{2}}{2 B_{0}^{2}})}$ phases where the optimal policies are played in succession, since any two optimal phases must have at least one non-optimal slot in between. The total expected regret of any optimal phase is bounded by $\frac{\mu_{\max}}{1 - \g}$ and the expected regret incurred during a non-optimal slot is bounded by $\D_{\max}$. The total expected $T$-slot regret of the MC-LPSM algorithm is, therefore, upper bounded by the expression (\ref{eq:MC_regret}).
\end{proof}

Note that the length of the exploration sequence specified in theorem \ref{thm:MC_regret} scales logarithmically in time. The MC-LPSM algorithm using this exploration sequence, therefore, solves $O(\ln T)$ number of LPs in $T$ slots, similar in order to the single channel Epoch-LPSM algorithm.

It must be noted that the logarithmic order regret is achievable by MC-LPSM if we know $d$, a lower bound on $\D_{3}$ and $\D_{4}$. This is required in order to define a constant $w$ that leads to the series convergence in the regret proof. If no such knowledge is available, the exploration sequence needs to be expanded in order to achieve a regret that is arbitrarily close to the logarithmic order, similar to the DSEE techniques from \cite{vakili2013}. The regret result for such an exploration sequence is as follows:
\begin{theorem} [Theorem $2$ from \cite{vakili2013}]
Let $g$ be any positive, monotonically non-decreasing sequence with $g(t) \rightarrow \infty$ as $t \rightarrow \infty$. Construct an exploration sequence as follows: for any $t > 1$, include $t$ in $\sR$ iff $\abs{\sR(t-1)} < M \lceil g(t) \ln t \rceil$. Under this exploration sequence $\sR$, the $T$-slot expected regret of MC-LPSM algorithm is $O(g(T) \ln T)$.
\end{theorem}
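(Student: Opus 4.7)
The plan is to mirror the proof of Theorem \ref{thm:MC_regret} step by step, substituting the stronger exploration guarantee $\lceil g(t) \ln t \rceil$ in place of $\lceil w \ln t \rceil$. As before, I would decompose the regret into three sources: (i) slots $t \in \sR$ (exploration), (ii) exploitation slots in which $\phi_{t} \neq \phi^{*}$, and (iii) exploitation slots in which $\phi_{t} = \phi^{*}$ but $\b_{t} \neq \b^{*}$. The same accounting as in Theorem \ref{thm:MC_regret} --- each failure slot contributes at most $\D_{\max}$, and any maximal run of optimal $(\b^{*},\phi^{*})$ play contributes at most $\mu_{\max}/(1-\g)$ --- will bound the total regret in terms of these three counts.

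For source (i), a simple induction on $t$ on the rule defining $\sR$ gives $\abs{\sR(T)} \leq M \lceil g(T) \ln T \rceil$, so this contribution is already $O(g(T) \ln T)$, which will turn out to be the dominant term. Moreover, because the exploration is still round-robin and is triggered whenever $\abs{\sR(t-1)} < M \lceil g(t) \ln t \rceil$, at any exploitation time $t \notin \sR$ every channel has been sampled at least $\lceil g(t) \ln t \rceil$ times. This is the only fact from the construction that the concentration arguments need.

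For sources (ii) and (iii), I would simply re-run the Hoeffding calculations from the proof of Theorem \ref{thm:MC_regret}, with the number of samples upgraded from $\lceil w \ln t \rceil$ to $\lceil g(t) \ln t \rceil$. The bounds $t^{-\frac{w}{2}(\D_{3}/B_{0})^{2}}$ and $t^{-\frac{w}{2}(\D_{4}/B_{0})^{2}}$ are then replaced by $t^{-\frac{g(t)}{2}(\D_{3}/B_{0})^{2}}$ and $t^{-\frac{g(t)}{2}(\D_{4}/B_{0})^{2}}$, so letting $d = \min\{\D_{3},\D_{4}\}$ the expected number of non-optimal exploitation slots is bounded by
\begin{equation*}
2 A M \sum_{t=1}^{\infty} t^{-\frac{g(t)}{2}(d/B_{0})^{2}}.
\end{equation*}
Thus the non-exploration contribution is $O(1)$, and summing all three pieces yields the claimed $O(g(T) \ln T)$ bound.

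The only real subtlety, and where the hypothesis $g(t) \to \infty$ is essential, is showing that the series above converges without any a priori knowledge of the gap $d$. I would argue that, since $g(t) \to \infty$, for any $\alpha > 1$ there is a threshold $t_{0}$ with $\tfrac{1}{2}(d/B_{0})^{2} g(t) \geq \alpha$ for all $t \geq t_{0}$; the tail of the series is then dominated by $\sum_{t \geq t_{0}} t^{-\alpha} < \infty$, while the prefix $t < t_{0}$ contributes only a finite number of terms. This replaces the role played by the condition $w > 2 B_{0}^{2}/d^{2}$ in Theorem \ref{thm:MC_regret} and completes the argument.
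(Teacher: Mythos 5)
Your proposal is correct. Note, however, that the paper itself supplies no proof of this statement: it is imported by citation as Theorem~2 of \cite{vakili2013}, which is proved there for the DSEE algorithm in the standard MAB setting, and the paper silently asserts that the same conclusion holds for MC-LPSM over the MDP. What you have written is exactly the missing adaptation: the decomposition into exploration slots, exploitation slots with $\phi_{t}\neq\phi^{*}$, and exploitation slots with $\b_{t}\neq\b^{*}$ is the one used in the proof of Theorem~\ref{thm:MC_regret}; the induction giving $\abs{\sR(T)}\leq M\lceil g(T)\ln T\rceil$ and the round-robin guarantee of $\lceil g(t)\ln t\rceil$ samples per channel at each exploitation slot are the right replacements for the corresponding facts about $w\ln t$; and your observation that $\sum_{t} t^{-\frac{g(t)}{2}(d/B_{0})^{2}}$ converges because $g(t)\to\infty$ eventually pushes the exponent past any $\alpha>1$ (with only a finite, $T$-independent prefix before the threshold $t_{0}$) correctly replaces the role of the hypothesis $w>2B_{0}^{2}/d^{2}$, and is precisely why no a priori knowledge of $\D_{3},\D_{4}$ is needed in this variant. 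The phase-counting argument (each failure slot costs at most $\D_{\max}$, each maximal optimal run costs at most $\mu_{\max}/(1-\g)$, and the number of such runs is at most one more than the number of failure slots) carries over verbatim, so the total is $O(g(T)\ln T)$ dominated by exploration. In short, your argument is sound and is the self-contained justification that the paper's citation elides.
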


While this regret is not logarithmic in time, one can approach arbitrarily close to the logarithmic order by reducing the diverging rate of $g(t)$. With this construction of the exploration sequence, the MC-LPSM algorithm solves $O( g(T) \ln T)$ number of LPs in time $T$.

\subsection{Asymptotic Lower Bound}
In the multi-channel scenario, there exist one or more channels that are optimal for some transmit power levels with non-zero stationary probability. For every optimal channel $j$, there exists some state $s \in \sS$ such that $\phi^{*}(\b^{*}(s)) = j$. There may also exist arms that are either not optimal for any transmit power level or are optimal for power levels that have zero stationary probability. We now present an asymptotic lower-bound on regret of any algorithm for the multi-channel energy harvesting communications problem under certain conditions. To prove the regret bound, we first present a lower bound on the number of plays of the non-optimal channels for any algorithm. Our analysis is based on the asymptotic lower bound on the regret of the standard MAB problem by Lai and Robbins \cite{lai1985}. This MAB regret lower bound applies to the settings where the arm-distributions are characterized by a single parameter. This result was extended by Burnetas and Katehakis to distributions indexed by multiple parameters in \cite{burn1997}. In our analysis, however, we restrict ourselves to the single parameter channel-gain distributions.

Let the gain distribution of each channel be expressed by its density function $g(x; \p)$ with respect to some measure $\nu$ , where the density function $g(\cdot; \cdot)$ is known and $\p$ is an unknown parameter from some set $\Psi$. Although we consider continuous distributions here, the analysis also holds for discrete distributions where probability mass function replaces the density and the summations replace the integrals. Corresponding to a valid transmit power $a$ and parameter $\p \in \Psi$, we define the expected rate as 
\begin{align}
\mu(a; \p) = \int_{x \in \sX} f(a, x) g(x; \p) d\nu(x).
\end{align}
Let $\ttI(\p, \p')$ denote the Kullback-Leibler distance defined as
\begin{align}
\ttI(\p, \p') = \int_{x \in \sX} \left[ \ln \left( \frac{g(x; \p)}{g(x; \p')} \right) \right] g(x; \p) d\nu(x).
\end{align}
We now make following assumptions about the density and the parameter set under consideration.
\begin{enumerate}
\item[A1] Existence of mean: $\mu(a; \p) < \infty$ exists for any $\p \in \Psi$ and $a \in \sA$.
\item[A2] Denseness of $\Psi$: $\forall \p\in \Psi$, $\forall a \in \sA$ and $\forall \d >0$, $\exists \p' \in \Psi$ such that $\mu(a; \p) < \mu(a; \p') < \mu(a; \p) + \d$.
\item[A3] Positivity of distance: $0 < \ttI(\p, \p') < \infty$ whenever $\mu(a; \p) < \mu(a; \p')$ for some $a \in \sA$.
\item[A4] Continuity of $\ttI(\p, \p')$: $\forall \e > 0$, $\forall a \in \sA$ and $\forall \p, \p' \in \P$ such that $\mu(a; \p) < \mu(a; \p')$, $\exists \d = \d(a, \e, \p, \p') > 0$ for which $\abs{\ttI(\p, \p') - \ttI(\p, \p'')} < \e$ whenever $\mu(a; \p') < \mu(a; \p'') < \mu(a; \p') + \d$.
\end{enumerate}
For channel gain distributions satisfying these conditions, we present a lower bound on the number of plays of a non-optimal arm based on the techniques from \cite{lai1985}.

\begin{theorem} \label{thm:LB_count}
Assume that the density and the parameter set satisfy assumptions A1-A4. Let $\bm{\p} = (\p_{1}, \p_{2}, \cdots, \p_{M})$ be a valid parameter vector characterizing the distributions of the $M$ channels, $\bbP_{\bm{\p}}$ and $\bbE_{\bm{\p}}$ be the probability measure and expectation under $\bm{\p}$. Let $\sL$ be any allocation rule that satisfies for every $\bm{\p}$ as $T \rightarrow \infty$, $\mathfrak{R}_{\sL}(T) = o(T^{b})$ for every $b > 0$ over an MDP $\sM$. Let $N_{i}(T)$ denote the number of plays of $i$-th channel up to time $T$ by the rule $\sL$, and $\sO_{\bm{\p}}$ the index set of the optimal channels under the parameter vector $\bm{\p}$. Then for every channel $i \in \overline{\sO_{\bm{\p}}}$,
\begin{align}
\liminf\limits_{T \rightarrow \infty} \bbE_{\bm{\p}}\left[\frac{N_{i}(T)}{\ln T}\right] \geq \max_{j\in \sO_{\bm{\p}}} \frac{1}{\ttI(\p_{i},\p_{j})}. \label{eq:LB_count_result}
\end{align}
\end{theorem}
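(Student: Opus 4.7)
The plan is to adapt the classical change-of-measure argument of Lai and Robbins \cite{lai1985} to this MDP setting. Fix a non-optimal channel $i\in\overline{\sO_{\bm\p}}$ and any optimal channel $j\in\sO_{\bm\p}$; by the definition of $\sO_{\bm\p}$ there exist $s_0\in\sS$ with $\pi_{\b^*}(s_0)>0$ and $a^*=\b^*(s_0)$ such that $\phi^*(a^*)=j$. I will prove the one-sided bound $\liminf_{T\to\infty}\bbE_{\bm\p}[N_i(T)]/\ln T\geq 1/\ttI(\p_i,\p_j)$; the claim (\ref{eq:LB_count_result}) then follows by taking the maximum over $j\in\sO_{\bm\p}$.

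The first step is to construct a perturbed parameter vector $\bm\p'$ that agrees with $\bm\p$ outside its $i$-th coordinate, with $\p_i$ replaced by some $\p_i'\in\Psi$ (guaranteed by A2) satisfying $\mu(a^*;\p_i')>\mu(a^*;\p_j)$. Under $\bm\p'$, channel $i$ strictly maximizes the mean rate at power $a^*$, so the $\bm\p'$-optimal mapping pairs $(a^*,i)$ at state $s_0$. Since every deterministic stationary policy induces an ergodic chain (Proposition 1) that visits every state with positive stationary frequency, any allocation rule under $\bm\p'$ that plays channel $i$ fewer than $\alpha T$ times, for a sufficiently small $\alpha>0$, must forgo the $(a^*,i)$ pairing at $s_0$ on a $\Theta(T)$-sized set of slots, each contributing a strictly positive reward gap $\mu(a^*;\p_i')-\mu(a^*;\p_j)$. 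Combined with the sub-polynomial-regret hypothesis applied at $\bm\p'$, this yields $\bbP_{\bm\p'}\{N_i(T)<m_T\}=o(T^{b-1})$ for every $b>0$, where $m_T=\lceil(1-\d)\ln T/\ttI(\p_i,\p_i')\rceil$ and $\d\in(0,1)$ is an arbitrary constant to be sent to $0$ at the end.

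Next I would execute the standard change-of-measure argument on the log-likelihood ratio $L_n=\sum_{k=1}^n\ln[g(Y_k;\p_i)/g(Y_k;\p_i')]$, where $Y_1,Y_2,\dots$ are the i.i.d.\ samples observed from channel $i$. Since the densities for the other $M-1$ channels cancel, one has $\bbP_{\bm\p'}(E)=\bbE_{\bm\p}[\me^{-L_{N_i(T)}}\mathbf{1}_E]$ for any event $E$ in the observed $\sigma$-field up to time $T$. Setting
\begin{equation}
C_T=\{N_i(T)<m_T\}\cap\Big\{\max_{n\leq m_T}L_n\leq(1-\tfrac{\d}{2})\ln T\Big\},
\end{equation}
the bound $\me^{-L_{N_i(T)}}\geq T^{-(1-\d/2)}$ on $C_T$ gives $\bbP_{\bm\p}(C_T)\leq T^{1-\d/2}\bbP_{\bm\p'}(C_T)\to 0$ (choosing $b<\d/2$ above). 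A Kolmogorov-type maximal inequality coupled with the SLLN $L_n/n\to\ttI(\p_i,\p_i')$ a.s.\ under $\bm\p$ then shows that the second event in $C_T$ occurs with probability tending to $1$ under $\bm\p$, whence $\bbP_{\bm\p}\{N_i(T)<m_T\}\to 0$. Markov's inequality furnishes $\liminf\bbE_{\bm\p}[N_i(T)]/\ln T\geq(1-\d)/\ttI(\p_i,\p_i')$; driving $\mu(a^*;\p_i')\downarrow\mu(a^*;\p_j)$ via A2, invoking continuity (A4) to pass $\ttI(\p_i,\p_i')\to\ttI(\p_i,\p_j)$, and sending $\d\downarrow 0$ completes the argument.

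I expect the delicate step to be the first one, which is genuinely MDP-specific: translating a shortage of plays of channel $i$ under $\bm\p'$ into a linear-in-$T$ regret lower bound. In the classical bandit setup this is immediate from a fixed per-slot suboptimality gap, but here it requires combining the ergodicity of every deterministic stationary policy with a uniform (in the perturbation parameter) lower bound on the stationary visit frequency to $s_0$ under the $\bm\p'$-optimal mapping, so that the missed $(a^*,i)$ pairings genuinely accrue a reward deficit scaling linearly with $T-N_i(T)$ under any competing policy.
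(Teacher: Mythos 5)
Your proposal is correct and follows essentially the same route as the paper: a Lai--Robbins change-of-measure argument in which the non-optimal channel's parameter is perturbed (via A2/A4) to make it optimal, the sub-polynomial regret hypothesis under the perturbed measure is converted through stationary visit frequencies and Markov's inequality into a bound on the probability of too few plays, and the likelihood-ratio identity plus the SLLN transfer that bound back to the original measure. The paper makes the MDP-specific step explicit with the counting variables $N_{i,j}(T)$ and $T_{i}(T)$, and it absorbs the KL-continuity step into the initial choice of the perturbed parameter rather than taking a limit at the end, but these are presentational differences only.
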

\begin{proof}
Without the loss of generality, we assume that $1 \in \overline{\sO_{\bm{\p}}}$ and $2 \in \sO_{\bm{\p}}$ for the parameter vector $\bm{\p}$. This means that $\exists a \in \sA$ such that $\m(a; \p_{2}) > \m(a; \p_{1})$ and $\m(a; \p_{2}) \geq \m(a; \p_{j})$ for $3 \leq j \leq M$. Fix any $0 < \d < 1$. By assumptions A2 and A4, we can choose $\s \in \P$ such that
\begin{align}
\m(a; \s) > \m(a; \p_{2})\ \text{and}\ \abs{ \ttI(\p_{1}, \s) - \ttI(\p_{1}, \p_{2}) } < \d \ttI(\p_{1}, \p_{2}).
\end{align}
Let us define a new parameter vector $\bm{\s} = (\s, \p_{2}, \cdots, \p_{M})$ such that under $\bm{\s}$, $1 \in \sO_{\bm{\s}}$. The basic argument is that any algorithm incurring regrets of order $o(T^{b})$ for every $b>0$ must play every channel a minimum number of times to be able to distinguish between the cases $\bm{\p}$ and $\bm{\s}$. 

Let $N_{i, j}(T)$ denote the number of times the $i$-th channel has been played up to time $T$ with power levels for which $j$-th channel was the optimal channel. We, therefore, have $N_{i}(T) = \sum_{j = 1}^{M} N_{i,j}(T)$ where $N_{i,j}(T) \geq 0$ for all $(i,j)$ pairs. We define $T_{i}(T)$ as the number of plays of the power levels for which $i$-th channel is optimal up to time $T$. This implies that the allocation rule $\sL$ plays channels other than the $i$-th channel for $T_{i}(T) - N_{i,i}(T)$ number of times where they were non-optimal. Fix $0 < b < \d$. Since $\mathfrak{R}_{\sL}(T) = o(T^{b})$ as $T \rightarrow \infty$, we have $\bbE_{\bm{\s}} [N_{j,i}(T)] = o(T^{b})$ when $i\neq j$. Hence for distributions parametrized by $\bm{\s}$, we have
\begin{align}
\bbE_{\bm{\s}}[T_{1}(T) - N_{1,1}(T)] = \sum_{j\neq 1}\bbE_{\bm{\s}} [N_{j,1}(T)] = o(T^{b}). \label{eq:LB_cond1}
\end{align}

We define a stationary distribution over channel plays under the optimal power selection policy for the MDP as 
\begin{align}
\pi_{j} = \sum_{s\in \sS}\ \sum_{a \in \sA,\ \b^{*}(a)=j} \pi^{*}(s,a).
\end{align}
Note that the optimal policies $\f^{*}$ and $\b^{*}$ are dependent on the choice of the parameter vector characterizing the channels and so is $\pi_{j}$. For channels $j \in \sO_{\bm{\s}}$, $\pi_{j} > 0$ under $\bm{\s}$. 
Since $\mathfrak{R}_{\sL}(T) = o(T^{b})$ asymptotically, we have $\bbE_{\bm{\s}}[\abs{\pi_{1} T - T_{1}(T)}] = o(T^{b})$ as $T \rightarrow \infty$. Fix $0 < c < 1$. We have 
\begin{align}
\bbP_{\bm{\s}}\left\{ T_{1}(T) \leq (1 - c) \pi_{1} T \right\} &= \bbP_{\bm{\s}}\left\{ \pi_{1} T - T_{1}(T) \geq c \pi_{1} T \right\} \nn \\
&\leq \bbP_{\bm{\s}}\left\{ \abs{\pi_{1} T - T_{1}(T)} \geq c \pi_{1} T \right\} \nn \\
&\leq \frac{\bbE_{\bm{\s}}[\abs{\pi_{1} T - T_{1}(T)}]}{c \pi_{1}T} \tag{Markov's Inequality} \\
&= o(T^{b-1}).
\end{align}
We consider another event:
\begin{align}
&\bbP_{\bm{\s}} \left\{N_{1,1}(T) < (1-\d) \frac{\ln T}{\ttI(\p_{1}, \s)}; T_{1}(T) \geq (1-c) \pi_{1} T \right\} \nn \\ &\leq \bbP_{\bm{\s}} \left\{T_{1}(T) -  N_{1,1}(T) \geq T_{1}(T) - (1-\d) \frac{\ln T}{\ttI(\p_{1}, \s)}; T_{1}(T) \geq (1-c) \pi_{1} T \right\} \nn \\
&\leq \bbP_{\bm{\s}} \left\{T_{1}(T) -  N_{1,1}(T) \geq (1-c) \pi_{1} T - (1-\d) \frac{\ln T}{\ttI(\p_{1}, \s)}; T_{1}(T) \geq (1-c) \pi_{1} T \right\} \nn \\
&\leq \bbP_{\bm{\s}} \left\{T_{1}(T) -  N_{1,1}(T) \geq (1-c) \pi_{1} T - (1-\d) \frac{\ln T}{\ttI(\p_{1}, \s)} \right\} \nn \\
&\leq \frac{\bbE_{\bm{\s}}[T_{1}(T) - N_{1,1}(T)]}{(1-c) \pi_{1}T - O(\ln T)} \tag{Markov's Inequality} \\
&= o(T^{b-1}).
\end{align}

For the event $\bbP_{\bm{\s}}  \left\{N_{1}(T) < (1-\d) \frac{\ln T}{\ttI(\p_{1}, \s)} \right\}$, we have
\begin{align}
&\bbP_{\bm{\s}}  \left\{N_{1}(T) < (1-\d) \frac{\ln T}{\ttI(\p_{1}, \s)} \right\} \nn \\
&\leq \bbP_{\bm{\s}} \left\{N_{1,1}(T) < (1-\d) \frac{\ln T}{\ttI(\p_{1}, \s)} \right\} \nn \\
&\leq \bbP_{\bm{\s}}\left\{ T_{1}(T) \leq (1 - c) \pi_{1} T \right\} + \bbP_{\bm{\s}} \left\{N_{1,1}(T) < (1-\d) \frac{\ln T}{\ttI(\p_{1}, \s)}; T_{1}(T) \geq (1-c) \pi_{1} T \right\} \nn \\
& = o(T^{b-1}). \label{eq:LB_cond2}
\end{align}

Note that the allocation rule $\sL$ only knows the channel gain realizations of the arms it has played, it does not have the exact distributional knowledge. Let $Y_{1}, Y_{2}, \cdots$ denote the successive realizations of the $1$-st channel's gains. We define $L_{m} = \sum_{k=1}^{m} \ln \frac{g(Y_{k};\p_{1})}{g(Y_{k};\s)}$ and an event $\sE_{T}$ as 
\begin{align}
\sE_{T} = \left\{N_{1}(T) < (1-\d) \frac{\ln T}{\ttI(\p_{1}, \s)}\ \text{and}\ L_{N_{1}(T)} \leq (1-b) \ln T \right\}.
\end{align}
From the inequality in (\ref{eq:LB_cond2}), we have
\begin{align}
\bbP_{\bm{\s}} \left\{ \sE_{T} \right\} = o(T^{b-1}). \label{eq:LB_cond3}
\end{align}
Note the following relationship
\begin{align}
&\bbP_{\bm{\s}} \left\{ N_{1}(T)=n_{1}, \cdots, N_{M}(T) = n_{M}\ \text{and}\  L_{n_{1}} \leq (1-b) \ln T \right\} \nn \\
&= \int_{\left\{ N_{1}(T)=n_{1}, \cdots, N_{M}(T) = n_{M}\ \text{and}\  L_{n_{1}} \leq (1-b) \ln T \right\}} \prod_{k=1}^{n_{1}} \frac{g(Y_{k};\s)}{g(Y_{k};\p_{1})} d \bbP_{\bm{\p}} \nn \\
&= \int_{\left\{ N_{1}(T)=n_{1}, \cdots, N_{M}(T) = n_{M}\ \text{and}\  L_{n_{1}} \leq (1-b) \ln T \right\}} \me^{-L_{n_{1}}} d \bbP_{\bm{\p}} \nn \\
&\geq \me^{-(1-b)\ln T} \bbP_{\bm{\p}} \left\{ N_{1}(T)=n_{1}, \cdots, N_{M}(T) = n_{M}\ \text{and}\  L_{n_{1}} \leq (1-b) \ln T \right\} \nn \\
&= T^{-(1-b)} \bbP_{\bm{\p}} \left\{ N_{1}(T)=n_{1}, \cdots, N_{M}(T) = n_{M}\ \text{and}\  L_{n_{1}} \leq (1-b) \ln T \right\}. \label{eq:LB_cond4}
\end{align}
This result rests on the assumption that the allocation rule $\sL$ can only depend on the channel gain realizations it has observed by playing and possibly on some internal randomization in the rule. Note that $\sE_{T}$ is a disjoint union of the events of the form $\left\{ N_{1}(T)=n_{1}, \cdots, N_{M}(T) = n_{M}\ \text{and}\  L_{n_{1}} \leq (1-b) \ln T \right\}$ with $n_{1} +\cdots + n_{M} = T$ and $n_{1} <(1-\d) \frac{\ln T}{\ttI(\p_{1}, \s)} $. It now follows from equations (\ref{eq:LB_cond3}) and (\ref{eq:LB_cond4}) that as $T \rightarrow \infty$:
\begin{align}
\bbP_{\bm{\p}} \left\{ \sE_{T} \right\} \leq T^{1-b} \bbP_{\bm{\s}} \left\{ \sE_{T} \right\} \rightarrow 0. \label{eq:LB_cond5}
\end{align}

By the strong law of large numbers, $\frac{L_{m}}{m} \rightarrow \ttI(\p_{1}, \s) > 0$ and $\max\limits_{k \leq m} \frac{L_{k}}{m} \rightarrow \ttI(\p_{1}, \s)$ almost surely under $\bbP_{\bm{\p}}$. Since $1-b > 1-\d$, it follows that as $T \rightarrow \infty$:
\begin{align}
\bbP_{\bm{\p}} \left\{ L_{k} > (1-b) \ln T \ \text{for some}\ k < (1-\d) \frac{\ln T}{\ttI(\p_{1}, \s)} \right\} \rightarrow 0. \label{eq:LB_cond6}
\end{align}
From equations (\ref{eq:LB_cond5}) and (\ref{eq:LB_cond6}), we conclude that 
\begin{align}
\lim_{T \rightarrow \infty} \bbP_{\bm{\p}} \left\{N_{1}(T) <  \frac{(1-\d) \ln T}{\ttI(\p_{1}, \s)} \right\} = 0. \nn
\end{align}
In other words,
\begin{align}
\lim_{T \rightarrow \infty} \bbP_{\bm{\p}}\left\{N_{1}(T) < \frac{(1-\d) \ln T}{(1+\d) \ttI(\p_{1}, \p_{2})} \right\}= 0. \nn
\end{align}
This implies that 
\begin{align}
\liminf\limits_{T \rightarrow \infty} \bbE_{\bm{\p}}\left[\frac{N_{1}(T)}{\ln T}\right] \geq \frac{1}{\ttI(\p_{1},\p_{2})}. \label{eq:LB_cond7}
\end{align}

Note that we only considered one optimal arm above. Results like equation (\ref{eq:LB_cond7}) hold for all optimal arms. By combining the lower bounds for a fixed non-optimal arm, we get the result in equation (\ref{eq:LB_count_result}).
\end{proof}

\begin{theorem} \label{thm:LB_regret}
Assume that the density and the parameter set satisfy assumptions A1-A4. Let $\bm{\p}$ denote the parameter vector whose $j$-th entry is $\p_{j}$ and $\sO$ denote the index set of optimal channels. Let $\sL$ be any allocation rule that satisfies for every $\bm{\p}$ as $T \rightarrow \infty$, $\mathfrak{R}_{\sL}(T) = o(T^{b})$ for every $b > 0$ over an MDP $\sM$. Then the regret of $\sL$ satisfies
\begin{align}
\liminf\limits_{T \rightarrow \infty} \frac{\mathfrak{R}_{\sL}(T)}{\ln T} \geq \D_{3} \sum_{i \in \overline{\sO}} \left( \max_{j\in \sO} \frac{1}{\ttI(\p_{i},\p_{j})} \right). \label{eq:LB_regret}
\end{align}
\end{theorem}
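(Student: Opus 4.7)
The plan is to obtain the regret bound as a direct corollary of the count bound in Theorem \ref{thm:LB_count}, by charging each play of a channel $i \in \overline{\sO}$ an expected regret of at least $\D_{3}$. Writing $a_{t}$ and $j_{t}$ for the power level and channel chosen by $\sL$ at time $t$, I begin with the decomposition
\[
\mathfrak{R}_{\sL}(T) = \left( T\rho^{*} - \bbE\sum_{t=0}^{T-1} \mu^{*}(a_{t}) \right) + \bbE\sum_{t=0}^{T-1} \left( \mu^{*}(a_{t}) - \mu_{j_{t}}(a_{t}) \right),
\]
where the first parenthesis is the regret relative to a best-channel oracle that always uses $\phi^{*}(a)$ given action $a$ and is asymptotically non-negative since $\rho^{*}$ is the maximum stationary-average reward achievable under that oracle, while the summands of the second parenthesis are non-negative by the definition of $\mu^{*}(a)$.

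Next I would lower-bound the channel-selection term by counting only those slots on which a genuinely suboptimal channel is used. Whenever $j_{t} \neq \phi^{*}(a_{t})$ the corresponding summand is at least $\D_{3}$ by the definition of $\D_{3}$ in \eqref{eq:D3_def}. For a channel $i \in \overline{\sO}$, every play recorded by $N_{i}(T)$ thus carries a $\D_{3}$ gap, except for the corner case where $\phi^{*}(a_{t}) = i$; by the definition of $\overline{\sO}$ such an $a_{t}$ necessarily lies outside the support of $\b^{*}$, and the consistency assumption $\mathfrak{R}_{\sL}(T) = o(T^{b})$ forces the expected number of those slots to be $o(\ln T)$, for otherwise the power-selection parenthesis would itself grow at least as a constant times $\ln T$ (out-of-support actions carry a strictly positive LP-gap of at least $\D_{4}$). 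Consequently, asymptotically,
\[
\bbE\sum_{t=0}^{T-1} \left( \mu^{*}(a_{t}) - \mu_{j_{t}}(a_{t}) \right) \geq \D_{3} \sum_{i \in \overline{\sO}} \bbE[N_{i}(T)] - o(\ln T).
\]
Plugging in the per-channel lower bound $\liminf_{T \to \infty} \bbE[N_{i}(T)]/\ln T \geq \max_{j \in \sO} 1/\ttI(\p_{i}, \p_{j})$ from Theorem \ref{thm:LB_count}, dividing the whole inequality by $\ln T$, and taking the liminf then yields \eqref{eq:LB_regret}.

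The main obstacle I anticipate is the rigorous bookkeeping of the two lower-order terms just mentioned: showing that the corner case ``$\phi^{*}(a_{t}) = i$ with $i \in \overline{\sO}$'' contributes only $o(\ln T)$ plays, and showing that the power-selection parenthesis cannot drop below $-o(\ln T)$ in the limit. Both rely on the consistency assumption together with the observation that any out-of-support action of $\b^{*}$ has a strictly positive LP-gap, so that any non-negligible misallocation would already be detected at the coarser action-level regret and contradict $\mathfrak{R}_{\sL}(T) = o(T^{b})$. Once that is in place, the result is an immediate corollary of the count lower bound.
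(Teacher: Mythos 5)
Your proof follows essentially the same route as the paper's. The paper introduces a hypothetical rule $\sL'$ that copies the power choices of $\sL$ but always substitutes the optimal channel $\phi^{*}(a_{t})$; the resulting inequality $\mathfrak{R}_{\sL}(T) \geq \mathfrak{R}_{\sL'}(T) + \D_{3}\sum_{i \in \overline{\sO}} \bbE[N_{i}(T)]$ is exactly your decomposition into a power-selection term and a channel-selection term, and both arguments then conclude by invoking Theorem \ref{thm:LB_count}. So the core of your argument matches the paper.

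The one place where you diverge is your treatment of the corner case in which a channel $i \in \overline{\sO}$ satisfies $\phi^{*}(a) = i$ for some action $a$ with zero stationary probability under $\b^{*}$ (so plays of $(a,i)$ are counted in $N_{i}(T)$ but carry zero channel gap). You are right that this case exists and that the blanket charge of $\D_{3}$ per play of $i \in \overline{\sO}$ does not literally cover it --- the paper's own proof silently ignores it. However, your patch does not work as stated: from $\mathfrak{R}_{\sL}(T) = o(T^{b})$ for every $b>0$ you cannot conclude that the expected number of such slots is $o(\ln T)$. A rule could spend $\Theta(\ln T)$ slots on out-of-support actions, each contributing at most a bounded amount to the power-selection term, and the resulting $\Om(\ln T)$ regret is still $o(T^{b})$ for every $b>0$; consistency in the $o(T^{b})$ sense is simply too weak to rule this out. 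If you want to close this case rigorously you need a different argument --- for instance, absorbing those slots into the power-selection parenthesis with a per-slot gap that is at least $\D_{3}$ (which requires relating the LP gap of out-of-support state-action pairs to per-play regret, a nontrivial MDP fact), or restricting attention to instances where $\overline{\sO}$ contains no such channels. As written, that step of your proof fails, although the overall architecture is the intended one.
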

\begin{proof}
Define a hypothetical allocation $\sL'$ based on $\sL$ such that whenever $\sL$ plays a non-optimal channel for some power level during its execution, $\sL'$ plays the optimal channel corresponding to the same power level. It follows $\sL$ in rest of the slots. If $N'$ denotes the count variables corresponding to $\sL'$, then for $i \in \overline{\sO}$ we have $N'_{i}(T) = 0$ and for $j \in \sO$ we have
\begin{align}
N'_{j}(T) = N_{j}(T) + \sum_{i \in \overline{\sO}} N_{i,j}(T).
\end{align}

According to equation (\ref{eq:D3_def}), $\D_{3}$ is the minimum expected gap between the optimal rate of any power level and the rate for any other channel at that power. Using $\D_{3}$, we relate the regrets of $\sL$ and $\sL'$ as
\begin{align}
\mathfrak{R}_{\sL}(T) &\geq \mathfrak{R}_{\sL'}(T) + \bbE\left[ \sum_{i \in \overline{\sO}} N_{i}(T)\right] \D_{3} \nn \\
&\geq \D_{3} \sum_{i \in \overline{\sO}} \bbE\left[  N_{i}(T)\right].
\end{align}
Using theorem \ref{thm:LB_count}, we get the equation (\ref{eq:LB_regret}).
\end{proof}

Theorem \ref{thm:LB_regret} implies that when the gain distributions characterized by a single parameter for each channel and follow assumptions A1-A4, any algorithm with $\mathfrak{R}(T) = o(T^{b})$ for every $b > 0$ must play the non-optimal channels at least $\Om(\ln T)$ times asymptotically. In the presence of non-optimal channels, an asymptotic regret of $\Om(\ln T)$ is inevitable for any algorithm. Hence we conclude that our MC-LPSM algorithm is asymptotically order optimal when the system contains non-optimal channels. 

\section{Cost Minimization Problems} \label{sec:packet}
We have considered reward maximization problems for describing our online learning framework. This framework can also be applied to average cost minimization problems in packet scheduling with power-delay tradeoff as shown in figure \ref{fig:en_delay}. We describe this motivating example and the minor changes required in our algorithms.
 
\begin{figure}
    \centering
    \includegraphics[width=0.8\textwidth]{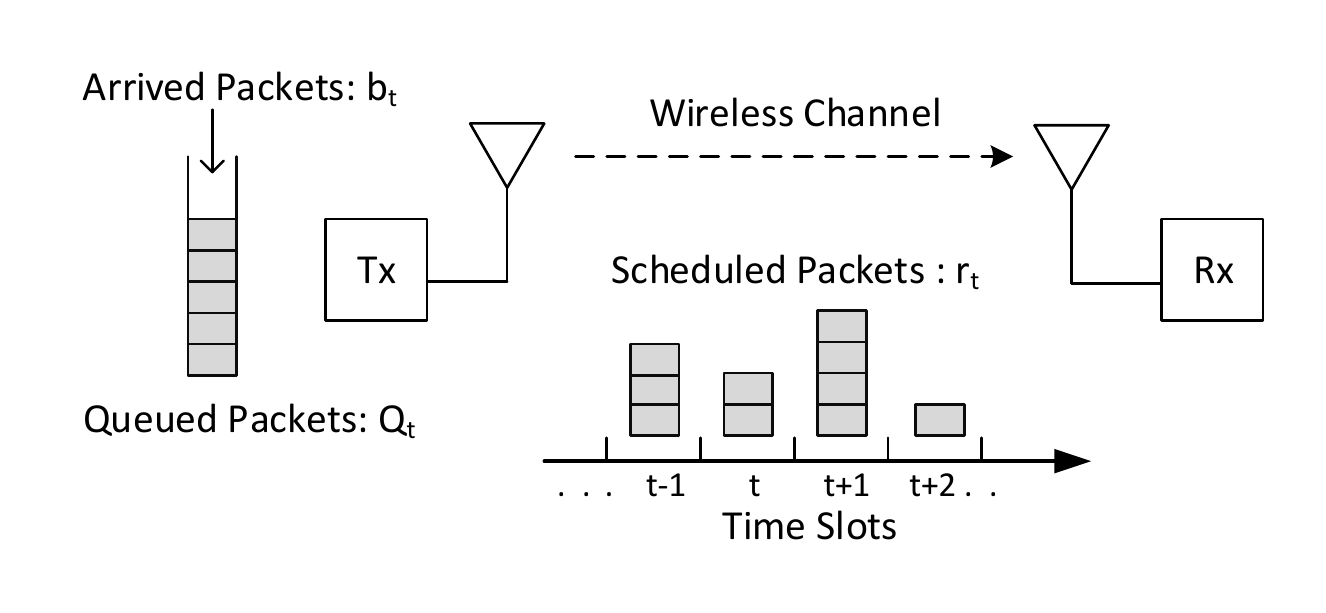}
    \caption{Packet scheduling over a wireless channel}
    \label{fig:en_delay}
\end{figure}

Consider a time-slotted communication system where a sender sends data packets to a receiver over a stochastically varying channel with unknown distribution. Such a communication system has been studied previously in \cite{yanting2015} assuming the channel to be non-stochastically varying over time. In our setting, the arrival of data packets is also stochastic with a known distribution. The sender can send multiple packets at a higher cost, or can defer some for latter slots while incurring a fixed delay penalty per packet for every time-slot it spends in the sender's queue. Let $Q_{t}$ denote the number of packets in the queue at time $t$ and $r_{t}(\leq Q_{t})$ be the number of packets transmitted by the sender during the slot. Hence, $Q_{t} - r_{t}$ number of packets get delayed. The sender's queue gets updated as
\begin{equation}
    Q_{t+1} = \min\{Q_{t} - r_{t} + b_{t}, Q_{\max} \}, \label{eq:queue_update}
\end{equation}
where $b_{t}$ is the number of new packet arrivals in $t$-th slot and $Q_{\max}$ is the maximum queue size possible. Since the data-rate is modelled according to equation (\ref{eq:rate_eq}), the power cost incurred during the $t$-th slot by transmitting $r_{t}$ packets over the channel becomes $w_{p} X_{t} 2^{r_{t}/ B}$, where $w_{p}$ is a constant known to the sender and $X_{t}$ is the instantaneous channel gain-to-noise ratio that is assumed to be i.i.d. over time. Assuming $w_{d}$ as the unit delay penalty, during the slot the sender incurs an effective cost
\begin{equation}
    C_{t} = w_{d} (Q_{t} - r_{t}) + w_{p} X_{t} 2^{r_{t} / B}.   \label{eq:cost_eq}
\end{equation}
This problem also represents an MDP where the queue size is the state and the number of packets transmitted is the action taken. The goal of this problem is to schedule transmissions $r_{t}$ sequentially and minimize the expected average cost over time
\begin{equation}
    \lim_{T\rightarrow \infty} \frac{1}{T} \bbE \left[ \sum_{t=1}^{T}  C_{t} \right]. 
\end{equation}
Note that the cost from equation (\ref{eq:cost_eq}) used in this scenario is also a function of the state unlike the problem of energy harvesting communications.

The presented algorithms LPSM and Epoch-LPSM also apply to cost minimization problems with minor changes. If $\rho(\b, \bfM)$ denotes the average expected cost of the policy $\b$, then $\rho^{*} = \min_{\b \in \sB} \rho(\b, \bfM)$. Using this optimal mean cost as the benchmark, we define the cumulative regret of a learning algorithm after $T$ time-slots as
\begin{equation}
    \mathfrak{R}(T) \defeq \bbE\left[ \sum_{t=0}^{T-1} C_{t}\right] - T \rho^{*}. \label{eq:cost_regret}
\end{equation}
In order to minimize the regret for this problem, the LP from (\ref{eq:LP}) needs to be changed from a maximization LP to a minimization LP. With these changes to the algorithms, all the theoretical guarantees still hold with the constants defined accordingly.

\section{Numerical Simulations} \label{sec:simu}
We perform simulations for the power allocation problem with $\sS= \{0, 1, 2, 3, 4\}$ and $\sA = \{0, 1, 2, 3, 4 \}$. Note that each state $s_{t}$ corresponds to $Q_{t}$ from equation (\ref{eq:battery_update}) with $Q_{\max} = 4$ and $a_{t}$ corresponds to the transmit power $q_{t}$ from equation (\ref{eq:rate_eq}). The reward function is the rate function from equation (\ref{eq:rate_eq}) and the channel gain is a scaled Bernoulli random variable with $\pr \{ X = 10\} = 0.2$ and $\pr \{ X = 0\} = 0.8$. The valid actions $\sA_{s}$ and the optimal action $\b^{*}(s)$ for each state $s$ are shown in table \ref{tab:state_action}. We use CVXPY \cite{cvxpy} for solving the LPs in our algorithms. For the simulations in figure \ref{fig:regret1}, we use $n_{0} = 2$ and $\eta = 10$, and plot the average regret performance over $10^{3}$ independent runs of different algorithms. Here, the naive policy never uses the battery, i.e. it uses all the arriving power for the current transmission. Playing such a fixed non-optimal policy causes linearly growing regret over time. Note that the optimal policy also incurs a regret because of the corresponding Markov chain not being at stationarity. We observe that LPSM follows the performance of the optimal policy with the difference in regret stemming from the first few time-slots when the channel statistics are not properly learnt and thus LPSM fails to find the optimal policy. As the time progresses, LPSM finds the optimal policy and its regret follows the regret pattern of the optimal policy. In Epoch-LPSM with $n_{0} = 2$ and $\eta = 10$, the agent solves the LP at $t=1$ and $t=2$. Its LP solution at $t=2$ is followed for the first epoch and thus the regret grows linearly till $t=19$. At $t=20$, a new LP is solved which often leads to the optimal policy and the regret contribution from latter slots, therefore, follows the regret of the optimal policy. It must be noted that Epoch-LPSM solves only 3 LPs during these slots, while LPSM solves $99$ LPs. Epoch-LPSM, therefore, reduces the computational requirements substantially while incurring a slightly higher cumulative regret. 

\begin{figure}
    \centering
    \includegraphics[width=0.7\textwidth]{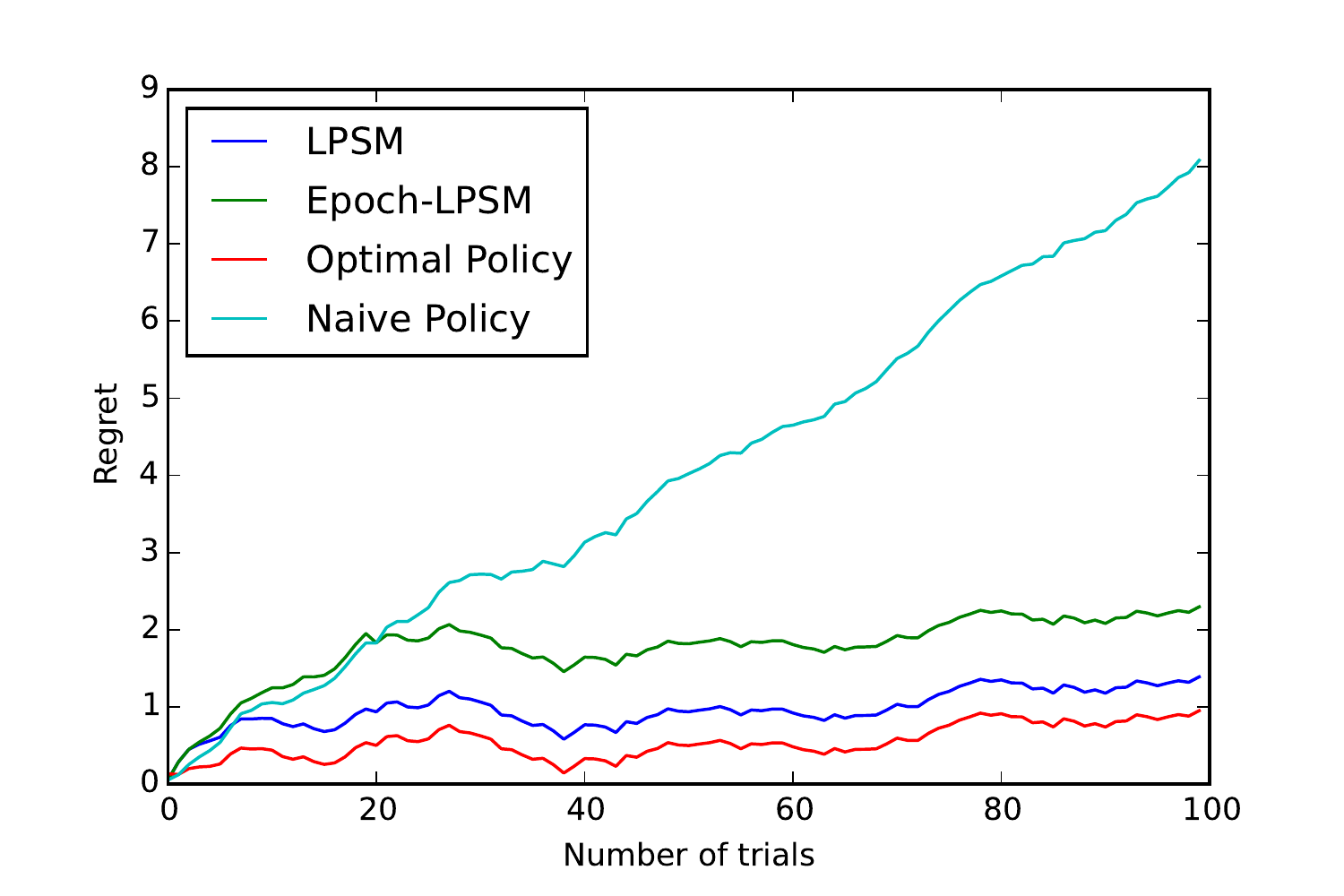}
    \caption{Regret performance of LPSM algorithms.}
    \label{fig:regret1}
\end{figure}

\begin{figure}
    \centering
    \includegraphics[width=0.7\textwidth]{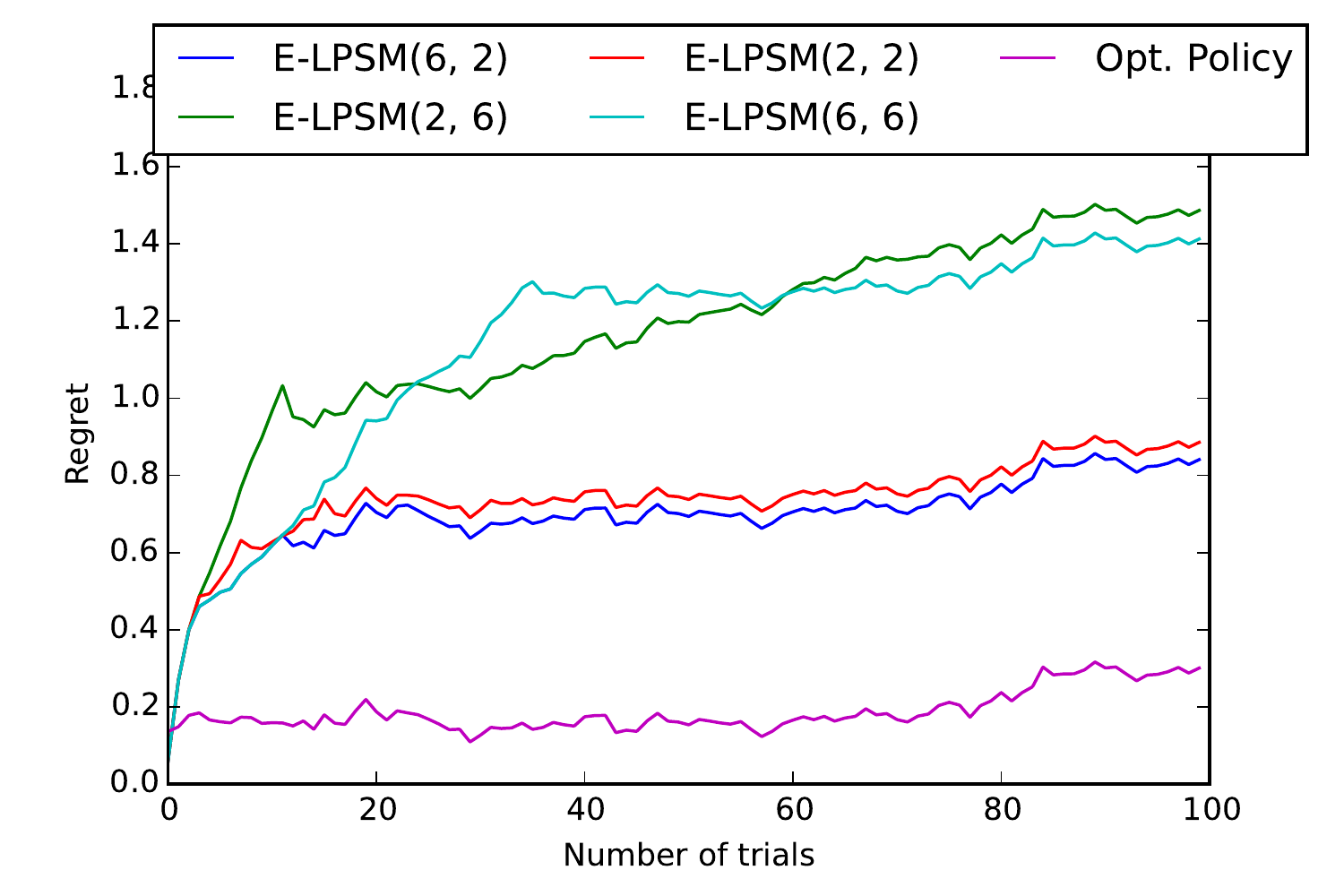}
    \caption{Effect of the parameters $n_{0}$ and $\eta$ on the regret of Epoch-LPSM.}
    \label{fig:regret2}
\end{figure}

\begin{table}[ht]
    \centering
    \caption{Actions for each state}
    \begin{tabular}{c|l | c}
        \hline
        $s$ & $\sA_{s}$ & $\b^{*}(s)$ \\
        \hline
         0 & $\{ 0\}$ & 0\\
         1 & $\{ 1 \}$ & 1\\
         2 & $\{1, 2\}$ & $1$\\
         3 & $\{1, 2, 3\}$ & $2$\\
         4 & $\{1, 2, 3, 4\}$ & $3$\\
        \hline
    \end{tabular}
    \label{tab:state_action}
\end{table}

In figure \ref{fig:regret2}, we plot the average regret performance over $10^{4}$ independent runs of Epoch-LPSM for the previous system with different $n_{0}$ and $\eta$ value pairs. As the value of $\eta$ increases for a fixed $n_{0}$, the length of the epochs increases and thus the potential non-optimal policies are followed for longer epochs. Larger the value of $n_{0}$, better are the policies played in the initial slots where the agent solves the LP at each time. These intuitions are consistent with figure \ref{fig:regret2}, where Epoch-LPSM with $n_{0}=6$ and $\eta=2$ has the lowest regret and the one with $n_{0}=2$ and $\eta=6$ has the highest of the lot. We see the regret vs computation tradeoff in action, as the decrease in computation by increasing $\eta$ or decreasing $n_{0}$ leads to larger regrets. We observe that $\eta$ has more impact on the regret than $n_{0}$. Notice that there are changes in the regret trends of Epoch-LPSM at $t=n_{0} \eta^{m}$ for small $m$, because these are the slots where a new LP is solved by MC-LPSM. Once the optimal policy is found by the algorithm, its regret in latter slots follows the trend of the optimal policy.

\begin{figure}
    \centering
    \includegraphics[width=0.7\textwidth]{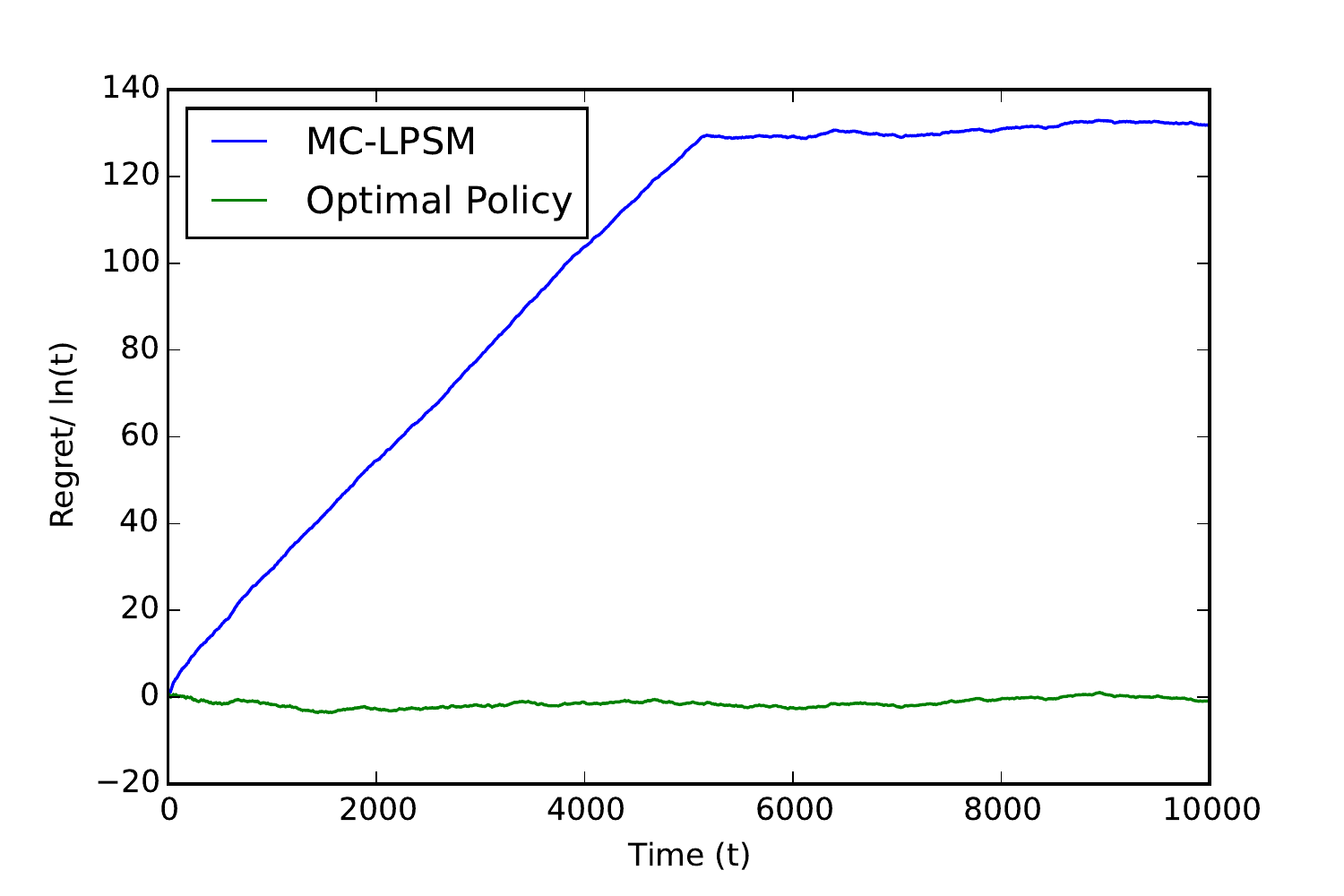}
    \caption{Regret performance of the MC-LPSM algorithm.}
    \label{fig:regret3}
\end{figure}

In figure \ref{fig:regret3}, we plot the regret performance of MC-LPSM for a system with $2$ communication channels. The channels are scaled Bernoulli random variables where the gain of the first follows $\pr \{ X = 10\} = 0.5$ and $\pr \{ X = 0\} = 0.5$, while that of the other follows $\pr \{ X = 22\} = 0.4$ and $\pr \{ X = 0\} = 0.6$. The optimal actions selection policy is same as the previous case, while the optimal mapping of transmit-power to the channels is $\phi^{*}(1) = 2$, $\phi^{*}(2) = 2$, $\phi^{*}(3) = 1$ and $\phi^{*}(4) = 1$. For MC-LPSM, we set $w= 300$ and plot the regret divided by the logarithm of the time index, averaged over $100$ realizations, in figure \ref{fig:regret3}. We notice that whenever the exploration slots are densely packed, the regret grows linearly as non-optimal policies are potentially played during exploration. As the exploration need gets satisfied over time, the agent solves the LP based on its rate estimates from exploration phases and the regret contribution from the exploitation remains bounded. The regret divided by the logarithm of time, therefore, saturates to a constant value as expected.

\section{Conclusion} \label{sec:conclusion}
We have considered the problem of power allocation over a stochastically varying channel with unknown distribution in an energy harvesting communication system. We have cast this problem as an online learning problem over an MDP. If the transition probabilities and the mean rewards associated with the MDP are known, the optimal policy maximizing the average expected reward over time can be determined by solving an LP specified in the paper. Since the agent is only assumed to know the distribution of the harvested energy, it needs to learn the rewards of the state-action pairs over time and make its decisions based on the learnt behaviour. For this problem, we have proposed two online learning algorithms: LPSM and Epoch-LPSM, which both solve the LP using the sample mean estimates of the rewards instead of the unknown mean rewards. The LPSM algorithm solves the LP at each time-slot using the updated estimates, while the Epoch-LPSM only solves the LP at certain pre-defined time-slots parametrized by $n_{0}$ and $\eta$ and thus, saves a lot of computation at the cost of an increased regret. We have shown that the regrets incurred by both these algorithms are bounded from above by constants. The system designers can, therefore, analyze the regret versus computation tradeoff and tune the parameters $n_{0}$ and $\eta$ based on their performance requirements. Through the numerical simulations, we have shown that the regret of LPSM is very close to that of the optimal policy. We have also analyzed the effect of the parameters $n_{0}$ and $\eta$ on the regret the Epoch-LPSM algorithm which approaches the regret of the optimal policy for small $\eta$ values and large $n_{0}$.

For the case of multiple channels, there is an extra layer of decision making to select a channel for transmission in each slot. For this problem, we have extended our approach and proposed the MC-LPSM algorithm. MC-LPSM separates the exploration of different channels to learn their rates from the exploitation where these rate-estimates of different channels for different channels are used to obtain a power selection policy and a channel selection policy in each slot. We have proved a regret upper bound that scales logarithmically in time and linearly in the number of channels. We have also shown that the total computational requirement of MC-LPSM also scales similarly. In order to show the asymptotic order optimality of our MC-LPSM algorithm, we have proved an asymptotic regret lower bound of $\Om(\ln T)$ for any algorithm under certain conditions.

While we have considered the reward maximization problem in energy harvesting communications for our analysis, we have shown that these algorithms also work for the cost minimization problems in packet scheduling with minor changes.

\bibliography{MAB_list}{}
\bibliographystyle{ieeetr}

\appendices

\section{Technical Lemmas} \label{apx:lemmas}
\begin{lemma}[Hoeffding's Concentration Inequality from \cite{hoeffding1963}] \label{lem:hoeffding}
Let $Y_{1}, ..., Y_{n}$ be i.i.d. random variables with mean $\m$ and range $[0,1]$. Let $S_{n} = \sum\limits_{t=1}^{n} Y_{t}$. Then for all $\a \geq 0$
\begin{align}
\pr \{ S_{n} \geq n \m + \a \} & \leq \me^{-2\a^{2}/n}  \nn \\
\pr \{ S_{n} \leq n \m - \a \} & \leq \me^{-2\a^{2}/n} \nn.
\end{align}
\end{lemma}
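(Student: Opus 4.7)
The plan is to prove Hoeffding's inequality via the standard Chernoff-bounding technique combined with a moment generating function (MGF) estimate for bounded random variables. First I would apply an exponential Markov inequality: for any $s > 0$,
\[
\pr \{ S_{n} - n\m \geq \a \} = \pr \{ \me^{s(S_{n} - n\m)} \geq \me^{s\a}\} \leq \me^{-s\a}\, \bbE[\me^{s(S_{n} - n\m)}].
\]
Independence of the $Y_{t}$ lets the expectation factor: $\bbE[\me^{s(S_{n} - n\m)}] = \prod_{t=1}^{n} \bbE[\me^{s(Y_{t} - \m)}]$, so it suffices to bound each centered MGF.

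Next I would establish the key MGF bound (Hoeffding's lemma): for any random variable $Y \in [0,1]$ with mean $\m$,
\[
\bbE[\me^{s(Y-\m)}] \leq \me^{s^{2}/8}.
\]
The proof uses convexity of $x \mapsto \me^{sx}$ on $[0,1]$, which gives $\me^{sY} \leq (1 - Y) + Y\me^{s}$. Taking expectation yields $\bbE[\me^{sY}] \leq 1 - \m + \m \me^{s}$, hence $\bbE[\me^{s(Y-\m)}] \leq \me^{L(s)}$ where $L(s) = -s\m + \ln(1 - \m + \m \me^{s})$. A direct computation shows $L(0) = L'(0) = 0$ and $L''(s) = p(s)(1-p(s))$ with $p(s) = \m \me^{s}/(1 - \m + \m \me^{s})$; since $p(1-p) \leq 1/4$, Taylor's theorem gives $L(s) \leq s^{2}/8$.

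Plugging this back, $\bbE[\me^{s(S_{n} - n\m)}] \leq \me^{n s^{2}/8}$, so
\[
\pr \{ S_{n} - n\m \geq \a \} \leq \me^{-s\a + n s^{2}/8}.
\]
Optimizing over $s > 0$, the right-hand side is minimized at $s^{*} = 4\a/n$, yielding the claimed bound $\me^{-2\a^{2}/n}$. The lower-tail inequality $\pr\{S_{n} \leq n\m - \a\} \leq \me^{-2\a^{2}/n}$ follows by applying the same argument to the i.i.d.\ variables $Y'_{t} = 1 - Y_{t}$, which also lie in $[0,1]$ and have mean $1-\m$; the upper tail for $\sum Y'_{t}$ translates directly into the lower tail for $S_{n}$.

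The main obstacle is the bound $L''(s) \leq 1/4$; everything else is a mechanical Chernoff argument. Once the Bernoulli-variance interpretation $L''(s) = p(s)(1-p(s))$ is identified, the $1/4$ bound is immediate from AM-GM, and the inequality $L(s) \leq s^{2}/8$ follows by integrating $L''$ twice using $L(0) = L'(0) = 0$.
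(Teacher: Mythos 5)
Your proof is correct. The paper does not actually prove this lemma --- it is quoted verbatim from Hoeffding's 1963 paper and used as a black-box concentration tool in the analyses of Theorems \ref{thm:non_opt_1}, \ref{thm:non_opt_2} and \ref{thm:MC_regret} --- so there is no in-paper argument to compare against. What you have written is the standard derivation: exponential Markov/Chernoff bound, factorization of the moment generating function by independence, Hoeffding's lemma $\bbE[\me^{s(Y-\m)}] \leq \me^{s^{2}/8}$ for $[0,1]$-valued $Y$ established via convexity and the second-order Taylor bound $L''(s) = p(s)(1-p(s)) \leq 1/4$, optimization at $s^{*} = 4\a/n$, and the reflection $Y'_{t} = 1 - Y_{t}$ for the lower tail. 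All steps check out, including the computation $-s^{*}\a + n(s^{*})^{2}/8 = -2\a^{2}/n$, so the proposal is a complete and valid proof of the stated lemma.
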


\section{Analysis of Markov Chain Mixing} \label{apx:MC}
We briefly introduce the tools required for the analysis of Markov chain mixing (see \cite{levin2009markov}, chapter $4$ for a detailed discussion). The total variation (TV) distance between two probability distributions $\phi$ and $\f'$ on sample space $\Omega$ is defined by 
\begin{equation}
    \norm{\phi - \f'}_{\tv} = \max_{\sE \subset \Omega} \abs{\phi(\sE) - \f'(\sE)}.
\end{equation}
Intuitively, it means the TV distance between $\phi$ and $\f'$ is the maximum difference between the probabilities of a single event by the two distributions. The TV distance is related to the $L_{1}$ distance as follows
\begin{equation}
    \norm{\phi - \f'}_{\tv} = \frac{1}{2} \sum_{\o \in \Omega} \abs{\phi(\o) - \f'(\o)}.
\end{equation}

We wish to bound the maximal distance between the stationary distribution $\pi$ and the distribution over states after $t$ steps of a Markov chain. Let $P^{(t)}$ be the $t$-step transition matrix with $P^{(t)}(s, s')$ being the transition probability from state $s$ to $s'$ of the Markov chain in $t$ steps and $\sP$ be the collection of all probability distributions on $\Om$. Also let $P^{(t)}(s,\cdot)$ be the row or distribution corresponding to the initial state of $s$. Based on these notations, we define a couple of useful $t$-step distances as follows:
\begin{align}
d(t) &\defeq \max_{s \in \sS} \norm{\pi - P^{(t)}(s,\cdot)}_{\tv} = \sup_{\phi \in \sP} \norm{\pi - \phi P^{(t)}}_{\tv}, \label{eq:def_d}\\
\hat{d}(t) &\defeq \max_{s, s' \in \sS} \norm{P^{(t)}(s',\cdot) - P^{(t)}(s,\cdot)}_{\tv} = \sup_{\f', \phi \in \sP} \norm{\f' P^{(t)} - \phi P^{(t)}}_{\tv}. \label{eq:def_d_hat}
\end{align}
For irreducible and aperiodic Markov chains, the distances $d(t)$ and $\hat{d}(t)$ have following special properties:
\begin{lemma}[\cite{levin2009markov}, lemma $4.11$] \label{lem:tv1}
For all $t > 0$, $d(t) \leq \hat{d}(t) \leq 2 d(t)$.
\end{lemma}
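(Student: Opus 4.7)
The plan is to prove the two inequalities separately, each using only elementary properties of the total variation distance together with the defining property $\pi P^{(t)} = \pi$ of the stationary distribution. Neither direction requires anything beyond convexity (equivalently, the triangle inequality extended to mixtures) of $\norm{\cdot}_{\tv}$, so the proof should be short.

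For the lower bound $d(t) \leq \hat{d}(t)$, my approach is to view $\pi$ as a mixture of the rows of $P^{(t)}$. Concretely, since $\pi$ is stationary we have $\pi(\cdot) = \sum_{s' \in \sS} \pi(s')\, P^{(t)}(s',\cdot)$, which lets me rewrite
\begin{equation}
\pi - P^{(t)}(s,\cdot) = \sum_{s' \in \sS} \pi(s') \left[ P^{(t)}(s',\cdot) - P^{(t)}(s,\cdot) \right].
\end{equation}
Taking TV norms, applying the triangle inequality for mixtures, and using that $\pi$ sums to $1$ gives $\norm{\pi - P^{(t)}(s,\cdot)}_{\tv} \leq \max_{s'} \norm{P^{(t)}(s',\cdot) - P^{(t)}(s,\cdot)}_{\tv} \leq \hat{d}(t)$. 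Maximizing the left-hand side over $s$ yields $d(t) \leq \hat{d}(t)$.

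For the upper bound $\hat{d}(t) \leq 2 d(t)$, I will insert $\pi$ as an intermediate distribution and use the triangle inequality for $\norm{\cdot}_{\tv}$ directly: for any states $s, s'$,
\begin{equation}
\norm{P^{(t)}(s',\cdot) - P^{(t)}(s,\cdot)}_{\tv} \leq \norm{P^{(t)}(s',\cdot) - \pi}_{\tv} + \norm{\pi - P^{(t)}(s,\cdot)}_{\tv} \leq 2 d(t).
\end{equation}
Maximizing the left-hand side over the pair $(s,s')$ gives $\hat{d}(t) \leq 2 d(t)$.

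I do not anticipate a real obstacle here; the only subtlety is the first direction, where one must recognize that stationarity lets us write $\pi$ as a convex combination of the rows $P^{(t)}(s',\cdot)$ so that convexity of TV distance applies. Once that observation is in place, both bounds are one-line consequences of the triangle inequality.
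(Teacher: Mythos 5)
Your proof is correct and is essentially the standard argument from the cited reference (Levin--Peres--Wilmer, Lemma 4.11), which the paper invokes without reproducing its proof: stationarity lets you write $\pi = \sum_{s'} \pi(s') P^{(t)}(s',\cdot)$ so that convexity of $\norm{\cdot}_{\tv}$ gives $d(t) \leq \hat{d}(t)$, and the triangle inequality through $\pi$ gives $\hat{d}(t) \leq 2d(t)$. No gaps.
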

\begin{lemma} [\cite{levin2009markov}, lemma $4.12$] \label{lem:tv2}
The function $\hat{d}$ is sub-multiplicative: $\hat{d}(t_{1} + t_{2}) \leq \hat{d}(t_{1}) \hat{d}(t_{2})$.
\end{lemma}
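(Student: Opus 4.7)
The plan is to exploit the second, variational form of $\hat{d}(t)$ in definition (\ref{eq:def_d_hat}) together with the Chapman--Kolmogorov identity $P^{(t_1+t_2)} = P^{(t_1)} P^{(t_2)}$, and reduce the bound for the signed measure $\phi' P^{(t_1)} - \phi P^{(t_1)}$ to the bound for an ordinary difference of two probability distributions multiplied by $P^{(t_2)}$.

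First, I would fix arbitrary $\phi, \phi' \in \sP$ and set $\mu \defeq \phi' P^{(t_1)} - \phi P^{(t_1)}$, so that
\begin{equation}
\phi' P^{(t_1+t_2)} - \phi P^{(t_1+t_2)} \;=\; \mu\, P^{(t_2)}. \nonumber
\end{equation}
Since $\mu$ is a signed measure with $\mu(\sS) = 0$, I would invoke the Hahn--Jordan decomposition to write $\mu = \mu^{+} - \mu^{-}$, where $\mu^{+}, \mu^{-}$ are nonnegative with disjoint support and $\mu^{+}(\sS) = \mu^{-}(\sS) =: a$. By the $L_1$ expression for the TV distance given in the appendix, $a = \|\mu\|_{\tv} = \|\phi' P^{(t_1)} - \phi P^{(t_1)}\|_{\tv}$, and by the definition (\ref{eq:def_d_hat}) this in turn is at most $\hat{d}(t_1)$.

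Next, assuming $a > 0$ (the case $a = 0$ is trivial), I would define the probability distributions $\alpha \defeq \mu^{+}/a$ and $\beta \defeq \mu^{-}/a$, so that $\mu = a(\alpha - \beta)$. Linearity of the action by $P^{(t_2)}$ on signed measures gives $\mu P^{(t_2)} = a\bigl(\alpha P^{(t_2)} - \beta P^{(t_2)}\bigr)$, and the TV norm is positively homogeneous, hence
\begin{equation}
\|\mu P^{(t_2)}\|_{\tv} \;=\; a\,\bigl\|\alpha P^{(t_2)} - \beta P^{(t_2)}\bigr\|_{\tv} \;\leq\; a\,\hat{d}(t_2) \;\leq\; \hat{d}(t_1)\,\hat{d}(t_2), \nonumber
\end{equation}
where the first inequality uses the second (supremum-over-distributions) form in definition (\ref{eq:def_d_hat}) of $\hat{d}(t_2)$ applied to the probability distributions $\alpha, \beta$. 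Taking the supremum over $\phi, \phi' \in \sP$ on the left-hand side then yields $\hat{d}(t_1+t_2) \leq \hat{d}(t_1)\hat{d}(t_2)$.

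The main obstacle is the step that converts the signed measure $\mu$ into a scaled difference of \emph{probability} distributions, since only probability distributions appear in the supremum defining $\hat{d}(t_2)$; the Hahn--Jordan decomposition together with the identity $\mu(\sS) = 0$ is precisely what licenses this, and once it is in hand the rest is just linearity of $P^{(t_2)}$ and homogeneity of the TV norm. I also need to justify that the two equivalent expressions in (\ref{eq:def_d_hat}) really coincide (the sup over pairs of distributions is attained at point masses, by convexity of $\|\cdot\|_{\tv}$), but this is a short convexity remark rather than a substantive difficulty.
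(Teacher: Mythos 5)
Your argument is correct. Note first that the paper itself offers no proof of this lemma: it is imported verbatim from \cite{levin2009markov} (Lemma 4.12), so the only meaningful comparison is with the textbook's proof. That proof proceeds by a coupling argument: for fixed initial states $x,y$ one takes an optimal coupling $(X_{t_1},Y_{t_1})$ of $P^{(t_1)}(x,\cdot)$ and $P^{(t_1)}(y,\cdot)$, writes $P^{(t_1+t_2)}(x,\cdot)-P^{(t_1+t_2)}(y,\cdot)$ as the expectation of $P^{(t_2)}(X_{t_1},\cdot)-P^{(t_2)}(Y_{t_1},\cdot)$, and bounds the result by $\hat{d}(t_2)\,\pr\{X_{t_1}\neq Y_{t_1}\}=\hat{d}(t_2)\,\hat{d}(t_1)$. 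Your route replaces the coupling by the Hahn--Jordan decomposition of the zero-mass signed measure $\mu=\phi'P^{(t_1)}-\phi P^{(t_1)}$: since $\mu^{+}(\sS)=\mu^{-}(\sS)=a=\norm{\mu}_{\tv}\leq\hat{d}(t_1)$, normalizing gives probability distributions $\alpha,\beta$ with $\mu=a(\alpha-\beta)$, and linearity plus the supremum form of $\hat{d}(t_2)$ finishes the bound. The two proofs are essentially dual to one another (an optimal coupling of two distributions is built precisely from the positive and negative parts of their difference), but yours has the mild advantage of not invoking the existence of optimal couplings, needing only elementary manipulations of signed measures; the coupling proof, in exchange, generalizes more readily beyond the finite/discrete setting. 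Your closing remark that the two expressions in (\ref{eq:def_d_hat}) coincide because the supremum of a convex function over the simplex is attained at the extreme points (point masses) is exactly the justification needed, and the degenerate case $a=0$ is handled. No gaps.
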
 

These lemmas lead to following useful corollary:
\begin{corollary} \label{cor:TV_dist}
For all $t \geq 0$, $d(t) \leq \hat{d}(1)^{t}$.
\end{corollary}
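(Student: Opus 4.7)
The plan is to combine the two preceding lemmas directly: Lemma \ref{lem:tv1} gives the pointwise bound $d(t) \leq \hat d(t)$, and Lemma \ref{lem:tv2} supplies the sub-multiplicativity $\hat d(t_1+t_2) \leq \hat d(t_1)\hat d(t_2)$. So the whole argument reduces to showing $\hat d(t) \leq \hat d(1)^{t}$ and then invoking $d(t) \leq \hat d(t)$.

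First I would handle the base case $t=0$. Since $P^{(0)}$ is the identity matrix, $P^{(0)}(s,\cdot)$ is a point mass at $s$, so $d(0) \leq 1 = \hat d(1)^{0}$, which matches the claimed bound (recall $\hat d(1) \leq 1$ because it is a TV distance). Next I would induct on $t \geq 1$: assuming $\hat d(t) \leq \hat d(1)^{t}$, applying Lemma \ref{lem:tv2} with $t_1 = t$ and $t_2 = 1$ gives
\begin{equation*}
\hat d(t+1) \;\leq\; \hat d(t)\,\hat d(1) \;\leq\; \hat d(1)^{t}\cdot \hat d(1) \;=\; \hat d(1)^{t+1},
\end{equation*}
closing the induction. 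Combining with $d(t) \leq \hat d(t)$ from Lemma \ref{lem:tv1} then yields $d(t) \leq \hat d(1)^{t}$ for all $t \geq 0$.

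There is essentially no obstacle here; the corollary is a one-line consequence of the two lemmas once the induction on sub-multiplicativity is unrolled. The only mild subtlety worth mentioning explicitly is the $t=0$ case, which does not follow from sub-multiplicativity itself and needs the trivial bound $d(0) \leq 1$.
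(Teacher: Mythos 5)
Your argument is correct and is exactly the route the paper intends: iterate the sub-multiplicativity of Lemma \ref{lem:tv2} to get $\hat{d}(t) \leq \hat{d}(1)^{t}$ and then apply $d(t) \leq \hat{d}(t)$ from Lemma \ref{lem:tv1}. The paper states the corollary without writing out the induction, so your explicit unrolling (including the trivial $t=0$ case) is just a more careful presentation of the same proof.
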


Consider an MDP with optimal stationary policy $\b^{*}$. Since the MDP might not start at the stationary distribution $\pi^{*}$ corresponding to the optimal policy, even the optimal policy incurs some regret as defined in equation (\ref{eq:regret}). We characterize this regret in the following theorem. 

\begin{theorem}[Regret of Optimal Policy] \label{thm:opt_regret}
For an ergodic MDP, the total expected regret of the optimal stationary policy with transition probability matrix $P_{*}$ is upper bounded by $(1-\g)^{-1} \mu_{\max}$, where $\g = \max\limits_{s, s' \in \sS} \norm{P_{*}(s',\cdot) - P_{*}(s,\cdot)}_{\tv}$ and $\mu_{\max} = \max\limits_{s \in \sS, a \in \sA} \mu(s,a)$.
\end{theorem}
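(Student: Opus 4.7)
The plan is to view the optimal policy $\b^*$ as inducing a single Markov chain with transition matrix $P_*$ and to decompose the total regret slot-by-slot. Let $f(s) \defeq \mu(s, \b^*(s))$, so that $\rho^* = \sum_s \pi^*(s) f(s)$ is the expectation of $f$ under the stationary distribution, while the expected reward at time $t$ starting from some initial state $s_0$ equals $\sum_s P_*^{(t)}(s_0, s) f(s)$, the expectation of $f$ under the $t$-step distribution. The instantaneous regret at time $t$ is therefore the difference of expectations of the same bounded function $f$ under two distributions, and the total regret is the sum of these differences over $t \geq 0$.

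Next I would use the standard TV-duality for differences of expectations. Since $\sum_s (\pi^*(s) - P_*^{(t)}(s_0, s)) = 0$, we may replace $f$ by $f - c$ for any constant $c$ without changing the difference. Choosing $c = \mu_{\max}/2$ (noting that rewards are non-negative and bounded above by $\mu_{\max}$) gives $\|f - c\|_\infty \leq \mu_{\max}/2$, and the triangle inequality together with the $L_1$-form of TV distance yields
\begin{equation}
\left| \rho^* - \bbE[r_t] \right| \;\leq\; \mu_{\max} \, \norm{\pi^* - P_*^{(t)}(s_0, \cdot)}_{\tv} \;\leq\; \mu_{\max} \, d(t), \nn
\end{equation}
where $d(t)$ is defined in (\ref{eq:def_d}) for the optimal chain. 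Invoking Corollary \ref{cor:TV_dist} I then get $d(t) \leq \hat d(1)^t$, and the definition of $\gamma$ matches $\hat d(1)$ exactly, so $d(t) \leq \gamma^t$. Summing the geometric series $\sum_{t=0}^\infty \gamma^t = (1-\gamma)^{-1}$ delivers the claimed upper bound $\mu_{\max}/(1-\gamma)$.

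The main technical point to get right is the constant in the TV-to-expectation bound: a naive application gives $2\mu_{\max}$, and recovering the tighter $\mu_{\max}$ requires the centering trick above. A secondary issue is ensuring the geometric series actually converges, i.e. that $\gamma < 1$. This is not automatic from ergodicity alone, but for the MDP at hand it is guaranteed by Proposition 1: the argument there shows $P^{(1)}(s,s) > 0$ for every state, so any two rows of $P_*$ share strictly positive mass on a common state and hence $\norm{P_*(s',\cdot) - P_*(s,\cdot)}_{\tv} < 1$ uniformly in $s, s'$. Everything else is routine algebraic manipulation of the telescoping sum.
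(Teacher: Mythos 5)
Your proposal is correct and follows essentially the same route as the paper: decompose the regret as $\sum_t (\pi^* - \phi_t)\cdot\mu^*$, bound each term by $\mu_{\max}\,d(t)$, apply Corollary \ref{cor:TV_dist} to get $d(t)\le\gamma^t$, and sum the geometric series. The only cosmetic difference is that the paper obtains the single-slot bound by dropping negative entries and using the positive-part characterization of TV distance, whereas you center $f$ at $\mu_{\max}/2$ and use the $L_1$ form (both relying on $0\le\mu^*\le\mu_{\max}$); your added remark that $\gamma<1$ follows from Proposition 1 for this MDP is a worthwhile observation the paper leaves implicit.
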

\begin{proof}
Let $\phi_{0}$ be the initial distribution over states and $\phi_{t} = \phi_{0} P_{*}^{(t)}$ be such distribution at time $t$ represented as a row vectors. Also, let $\mu^{*}$ be a row vector with the entry corresponding to state $s$ being $\mu(s, \b^{*}(s))$. We use $d^{*}(t)$ and $\hat{d}^{*}(t)$ to denote the $t$-step distances from equations (\ref{eq:def_d}) and (\ref{eq:def_d_hat}) for the optimal policy. Ergodicity of the MDP ensures that the Markov chain corresponding to the optimal policy is irreducible and aperiodic, and thus lemmas \ref{lem:tv1} and \ref{lem:tv2} hold. The regret of the optimal policy, therefore, gets simplified as:
\begin{align}
\mathfrak{R}^{*}(\phi_{0}, T) &= T \rho^{*} - \sum_{t=0}^{T-1} \phi_{t} \cdot \mu^{*} \nn \\
&= T (\pi^{*} \cdot \mu^{*}) - \sum_{t=0}^{T-1} \phi_{t} \cdot \mu^{*} \nn \\
&= \sum_{t=0}^{T-1} (\pi^{*} - \phi_{t}) \cdot \mu^{*} \nn \\
&\leq \sum_{t=0}^{T-1} (\pi^{*} - \phi_{t})_{+} \cdot \mu^{*} \tag{Negative entries ignored} \\
&= \sum_{t=0}^{T-1} \sum_{s \in \sS} (\pi^{*}(s) - \phi_{t}(s))_{+} \mu^{*}(s) \nn \\
&\leq \mu_{\max} \sum_{t=0}^{T-1} \sum_{s \in \sS} (\pi^{*}(s) - \phi_{t}(s))_{+} \nn \\
&= \mu_{\max} \sum_{t=0}^{T-1}  \norm{\pi^{*} - \phi_{0} P_{*}^{(t)}}_{\tv} \nn \\
&\leq \mu_{\max} \sum_{t=0}^{T-1}  d^{*}(t) \nn \\
&\leq \mu_{\max} \sum_{t=0}^{T-1}  \left(\hat{d}^{*}(1)\right)^{t} \tag{From corollary \ref{cor:TV_dist}} \\
&= \mu_{\max} \sum_{t=0}^{T-1}  \g^{t} \nn \\
&\leq \mu_{\max}  \frac{1}{1- \g}. \nn
\end{align}
Note that this regret bound is independent of the initial distribution over the states.
\end{proof}

\end{document}